\documentclass{article}

\usepackage{microtype}
\usepackage{graphicx}
\usepackage{subcaption}
\usepackage{booktabs} % for professional tables
\usepackage{enumitem}
\usepackage{xcolor}
\usepackage{colortbl}
\usepackage{tcolorbox}
\usepackage{array}
\tcbuselibrary{breakable,skins}
\usepackage{titlesec}
\titlespacing*{\paragraph}{0pt}{0.2em}{0.6em}

\usepackage{hyperref}

\usepackage[accepted]{icml2026}

\usepackage{amsmath}
\usepackage{amssymb}
\usepackage{mathtools}
\usepackage{amsthm}
\usepackage{algorithm}           % For pseudocode
\usepackage{algpseudocode}       % For pseudocode
\usepackage{caption}             % For algorithm 
\usepackage{multirow}
\usepackage{booktabs}
\usepackage{multirow}

\usepackage[capitalize,noabbrev]{cleveref}

\theoremstyle{plain}
\newtheorem{theorem}{Theorem}[section]

\newtheorem{lemma}[theorem]{Lemma}

\theoremstyle{definition}
\newtheorem{definition}[theorem]{Definition}

\theoremstyle{remark}

\newcommand{\maxtpot}[0]{\textit{max\_waiting\_time}\ }

\usepackage[textsize=tiny]{todonotes}

\icmltitlerunning{Beyond Prediction: Tail-Aware Scheduling for Large Language Model Inference}

\begin{document}

\twocolumn[
  \icmltitle{Beyond Prediction: Tail-Aware Scheduling for LLM Inference}

  % It is OKAY to include author information, even for blind submissions: the
  % style file will automatically remove it for you unless you've provided
  % the [accepted] option to the icml2026 package.

  % List of affiliations: The first argument should be a (short) identifier you
  % will use later to specify author affiliations Academic affiliations
  % should list Department, University, City, Region, Country Industry
  % affiliations should list Company, City, Region, Country

  % You can specify symbols, otherwise they are numbered in order. Ideally, you
  % should not use this facility. Affiliations will be numbered in order of
  % appearance and this is the preferred way.
  \icmlsetsymbol{equal}{*}

  \begin{icmlauthorlist}
    \icmlauthor{Yueying Li}{cornell CS}
    \icmlauthor{Yuanfan Chen}{equal,cornell CS}
    \icmlauthor{Jiayang Chen}{equal,cornell CS}
    \icmlauthor{Esha Choukse}{microsoft}
    \icmlauthor{Haoran Qiu}{microsoft}
    \icmlauthor{G. Edward Suh}{nvidia,cornell ECE}
    \icmlauthor{Rodrigo Fonseca}{microsoft}
    \icmlauthor{Ziv Scully}{cornell ORIE}
    \icmlauthor{Udit Gupta}{cornell ECE}
  \end{icmlauthorlist}

  % TODO: fill in details (and maybe split Cornell into Tech/CS, Tech/ECE, Ithaca/ORIE, and maybe more categories?)
  \icmlaffiliation{cornell CS}{Cornell University, Computer Science Department, NY, USA}
  \icmlaffiliation{cornell ECE}{Cornell University, Electrical and Computer Engineering Department, NY, USA}
  \icmlaffiliation{cornell ORIE}{Cornell University, Operations Research and Information Engineering Department, NY, USA}
  \icmlaffiliation{microsoft}{Microsoft Azure System Research, WA, USA}
  \icmlaffiliation{nvidia}{NVIDIA Corporation, CA, USA}

  \icmlcorrespondingauthor{Yueying Li}{yl3469@cornell.edu}

  % You may provide any keywords that you find helpful for describing your
  % paper; these are used to populate the "keywords" metadata in the PDF but
  % will not be shown in the document
  \icmlkeywords{Machine Learning, ML Systems, LLM Inference, Queuing, Scheduling, Tail Latency}

  \vskip 0.3in
]

% this must go after the closing bracket ] following \twocolumn[ ...

% This command actually creates the footnote in the first column listing the
% affiliations and the copyright notice. The command takes one argument, which
% is text to display at the start of the footnote. The \icmlEqualContribution
% command is standard text for equal contribution. Remove it (just {}) if you
% do not need this facility.

% Use ONE of the following lines. DO NOT remove the command.
% If you have no special notice, KEEP empty braces:
% \printAffiliationsAndNotice{}  % no special notice (required even if empty)
% % Or, if applicable, use the standard equal contribution text:
\printAffiliationsAndNotice{\textsuperscript{*}Equal contribution (co-second authors).}

\begin{abstract}
LLM serving exhibits extreme length variability, making size-based scheduling difficult in practice. Recent LLM schedulers approximate SJF/SRPT using predicted decode lengths or rank and primarily report mean-centric metrics (e.g., TTFT/TBT). We show these prediction-driven policies can be fragile under distribution shifts, bursty arrivals, and GPU memory pressure, and still offer limited control over tail latency (P90–P99) that dominates user experience—even with perfect decode-length knowledge. We introduce a distribution-aware, prediction-free scheduling framework that replaces explicit length prediction with soft priority boosting driven by lightweight statistical signals. Our design co-optimizes scheduling with cache-aware preemption to account for memory-coupled decode dynamics that vary across workload mixes. Evaluated on production and open-sourced traces, our method achieves a P99 TTLT up to 35-50$\%$ lower than SRPT with perfect length prediction and a TTFT 34-47$\%$ lower across various workloads, including reasoning-heavy and chat-heavy tasks, demonstrating a robust alternative for tail-latency optimization in online LLM serving.
\end{abstract}

\section{Introduction}

Large language models (LLMs) are increasingly deployed in interactive applications such as code generation, conversation, and agentic workloads. In these settings, user experience is governed not only by average performance but also by tail latencies (often defined as 95\% or 99\% percentile TTFT, TTLT and TBT~\cite{li2025throughput,liu2024andes, agrawal2025evaluatingperformancellminference,zhong2024distserve}), making resource allocation and scheduling an important area to balance throughput and user experience~\cite{duan2024muxserve,sun2024llumnix,luo2025autellix,zhang2025tempo}. 

Beyond \textit{global} resource allocation like sharding, disaggregation, and routing~\cite{chen2025kairoslowlatencymultiagentserving,shi2025nexusproactiveintragpudisaggregationprefill,patke2024queue}, modern LLM \textit{local} replica schedulers play an important role in balancing throughput and latency trade-offs by considering the right batch-size, priority, and eviction order~\cite{agrawal2024taming,agrawal2024no}. Recently, more advanced scheduling systems in LLM often adopt \emph{prediction-based scheduling}, motivated by classical results showing that Shortest Job First (SJF) or Shortest Remaining Processing Time (SRPT) minimize \emph{mean} response time when job sizes are known~\cite{qiu2024efficient,fu2024efficient,shahout2024don,srivatsa2024preble}. They predict output length or the number of remaining tokens and approximate SJF/SRPT in practice. While effective at reducing average latency under accurate prediction, these methods are largely optimized for mean-centric metrics, and policies that optimize for the mean can exhibit poor tail latency behaviors~\cite{nair2010scheduling,nuyens2008preventing,wierman2012tail}. % In contrast, real-world service-level objectives (SLOs) are often dictated by \emph{tail latency}, which can degrade when aggressive preemption interacts with noisy predictions and KV-cache-coupled swapping or recomputation. 

This challenge is amplified by the rise of \emph{reasoning-augmented LLMs}, where generation may involve multi-step reasoning, self-reflection, tool invocation, or adaptive termination. As a result, decode lengths are highly variable. Two requests with identical prefill sizes may diverge by orders of magnitude in completion time, and the job-size distribution can shift rapidly across workloads and over time. This inherent unpredictability makes it difficult to design schedulers that rely on job size estimation or request ranking.

These observations raise a key question: \emph{do schedulers need predictions of job lengths or rankings to optimize tail latency?}  Rather than attempting to predict decoding length or ranking, we argue that scheduling should be guided by running statistics and optimized for the tail adaptively.  

% Our approach is inspired by recent advances in \emph{tail-optimal scheduling theory}, which suggest a different lever than hard size-based ranking: \emph{soft, continuous} priority shaping. which show that \emph{soft, continuous prioritization}—rather than hard size-based ranking—can substantially reduce extreme delays under partial observability~\cite{yu2024strongly,brooker2022nudge,grosof2021nudge}. In particular, boosting-style policies parameterized by a single control variable (e.g., $\gamma$) can interpolate between serving earlier-arrived requests vs later requests with shorter length. 
Our approach is inspired by recent advances in \emph{tail-optimal scheduling theory}, which studies how to reduce asymptotic tail waiting times (e.g., P95/P99). A key takeaway is that, instead of \emph{hard} size-based ranking (e.g., SJF/SRPT), one should use a policy that, roughly speaking, takes first-come first-served as a baseline and then applies \emph{soft, continuous} priority shaping to improve tail performance. Specifically, each request  receives a smoothly varying score, called its ``boost'', that is computed from lightweight signals, and requests are served in order of increasing arrival time minus boost. Such ``boosting'' rules can provably suppress extreme delays under partial observability because they gently favor requests likely to dominate the tail without requiring accurate length prediction unless size information is available~\cite{yu2024strongly,harlev2025gittins,charlet2025gamma,charlet2026tail,brooker2022nudge,grosof2021nudge}.

However, applying such policies directly to LLM serving is challenging. Unlike classical queues, LLM inference is \emph{stateful and memory-coupled}: ongoing requests in decode phases accumulate large key--value (KV) caches, making preemption and eviction expensive~\cite{zhang2025tail,kwon2023efficient}.  Naïvely prioritizing requests without accounting for limited KV cache capacity and swapping cost can reduce queueing delay while \emph{increasing TTLT} due to recomputation and cache thrashing.

To address this gap, we propose a \emph{co-design of scheduling and eviction} for online LLM serving. Our approach uses lightweight, observable signals (e.g., decoded token length, past request latency distribution) to drive a \emph{distribution-aware boosting policy}, while jointly managing KV-cache eviction so that prioritization decisions translate into real TTLT improvements. By coupling boost-based scheduling with eviction-aware execution control, we enhance vanilla boosting policies to remain effective under memory constraints and multi-phase execution. Importantly, this design optimizes \emph{TTLT holistically}, balancing short requests and long requests, recomputation costs, and request localities rather than focusing on a single phase or metric.

Empirically, we show that this co-designed approach achieves consistent improvements in \emph{tail TTLT} and \emph{throughput (tokens/s)} across heterogeneous workloads, even when predictor quality degrades (to trade off more scability and less overhead) or distribution shifts occur. These results suggest that \emph{distribution-aware, eviction-conscious scheduling} provides a robust alternative to prediction-heavy SRPT approximations, better aligned with the realities of reasoning-augmented LLM serving.
\section{Background and Motivation}

\subsection{LLM Generation Length Prediction}
\begin{figure}[!ht]
    \centering
    % First Subplot: End-to-End Latency
    \begin{subfigure}[b]{\linewidth}
        \centering
        \includegraphics[width=\linewidth]{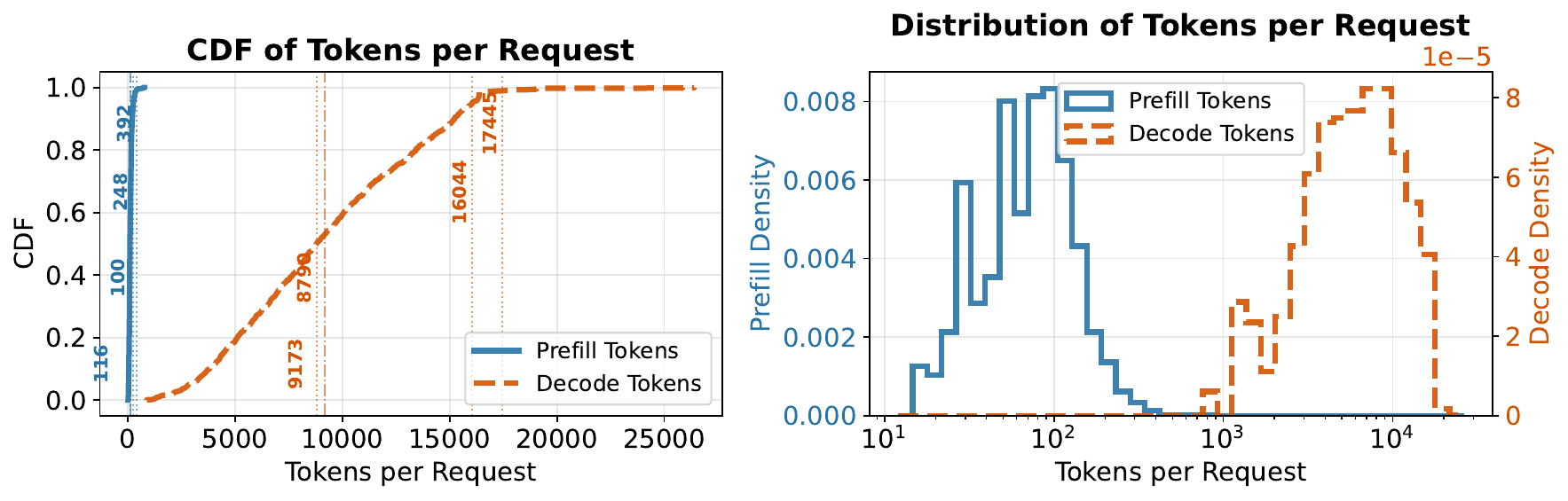}\vspace{-.6em}
        \caption{Prefill and decode token distribution for s1k workload~\cite{s1k}}
        \label{fig:distribution-s1k}
    \end{subfigure}
%% Distribution Plot
    \begin{subfigure}[b]{\linewidth}
        \centering
        \includegraphics[width=1\linewidth]{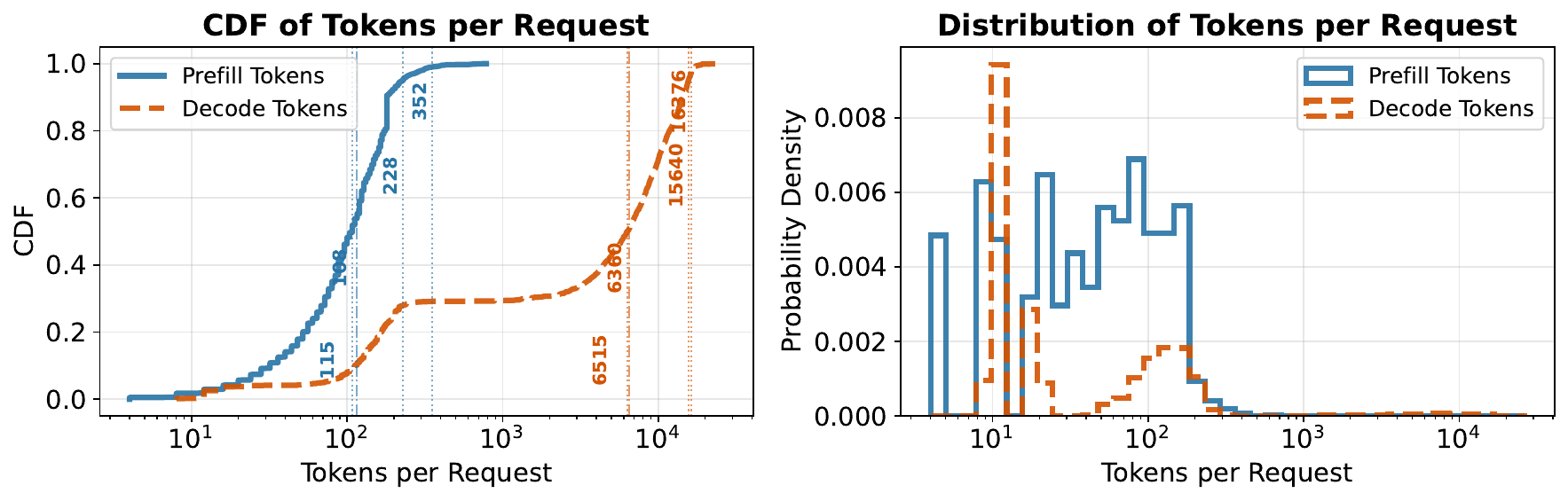}\vspace{-.6em}
        \caption{Mixture of reasoning and non-reasoning workloads of 10k samples in total used in evaluation~\cite{AzurePublicDatasetData}}
        \label{fig:distribution-mix}
    \end{subfigure}
    \caption{Distribution of different requests, the four vertical lines show mean, median, P95, and P99, respectively. Prefills are generally light-tailed; for decode length distribution, (a) is light-tailed, while (b) is more heavy-tailed. } \label{fig:distribution}
\end{figure}

\textbf{Related Work: Size-Based Scheduling for LLM Serving.}
Motivated by classical results that SJF/SRPT minimize mean response time when job sizes are known, recent LLM serving systems adopt size-based prioritization using predicted output length. Fu et al. propose a learning-to-rank (LTR) scheduler that approximates SJF by predicting generation lengths, demonstrating improvements in mean and P90 TTFT and time-between-token latencies~\cite{fu2024efficient}. Related approaches such as TRAIL (embedding-based schedulers) similarly rely on learned predictors to estimate remaining execution tokens and guide preemptive scheduling decisions, with a threshold to help reduce preemption at a later phase~\cite{shahout2024don}.
Heuristic methods in SGLang and vLLM, such as Shortest Prefix First (SPF), avoid explicit output prediction by using the known prefill length as a proxy for the total job size but fail to maintain high throughput or low TTLT.

\textbf{Challenges in LLM Output Length Predictability.}
In Figure~\ref{fig:motivation-variance}, we benchmarked the token-generation variance of the Qwen-2-7B and DeepSeek-R1-Distill Llama models across conversational (WildChat), code (BigCodeBench), and reasoning (S1K) datasets, finding that output variability is consistently high and task-dependent (excluding some system-context-limited output). Across the 2 models $\times$ 3 datasets $\times$ 3 randomly sampled prompts experimental matrix (Figure~\ref{fig:motivation-variance}), we observed high Coefficient of Variation (CV) across reasoning or non-reasoning tasks. For conversational tasks (WildChat), both Qwen ($CV \in [0.10, 0.35]$) and DeepSeek ($CV \in [0.22, 0.47]$) exhibited consistently high natural variance. Code generation (BigCodeBench) showed moderate stability for Qwen ($CV \in [0.10, 0.20]$) but significantly higher instability for DeepSeek ($CV \in [0.07, 0.46]$). In contrast, reasoning tasks (S1K) triggered extreme behaviors: while some prompts led to variable thought processes, many were constrained by loopy behavior up to system constraints, deterministic refusal, resulting in lower variance (Qwen: $CV \in [0.00, 0.19]$, DeepSeek: $CV \in [0.00, 0.12]$). The high variance is consistent across all sampled requests and unconditional on the chosen positive temperature.  
\begin{figure}[t]
    \centering
    \includegraphics[width=\linewidth]{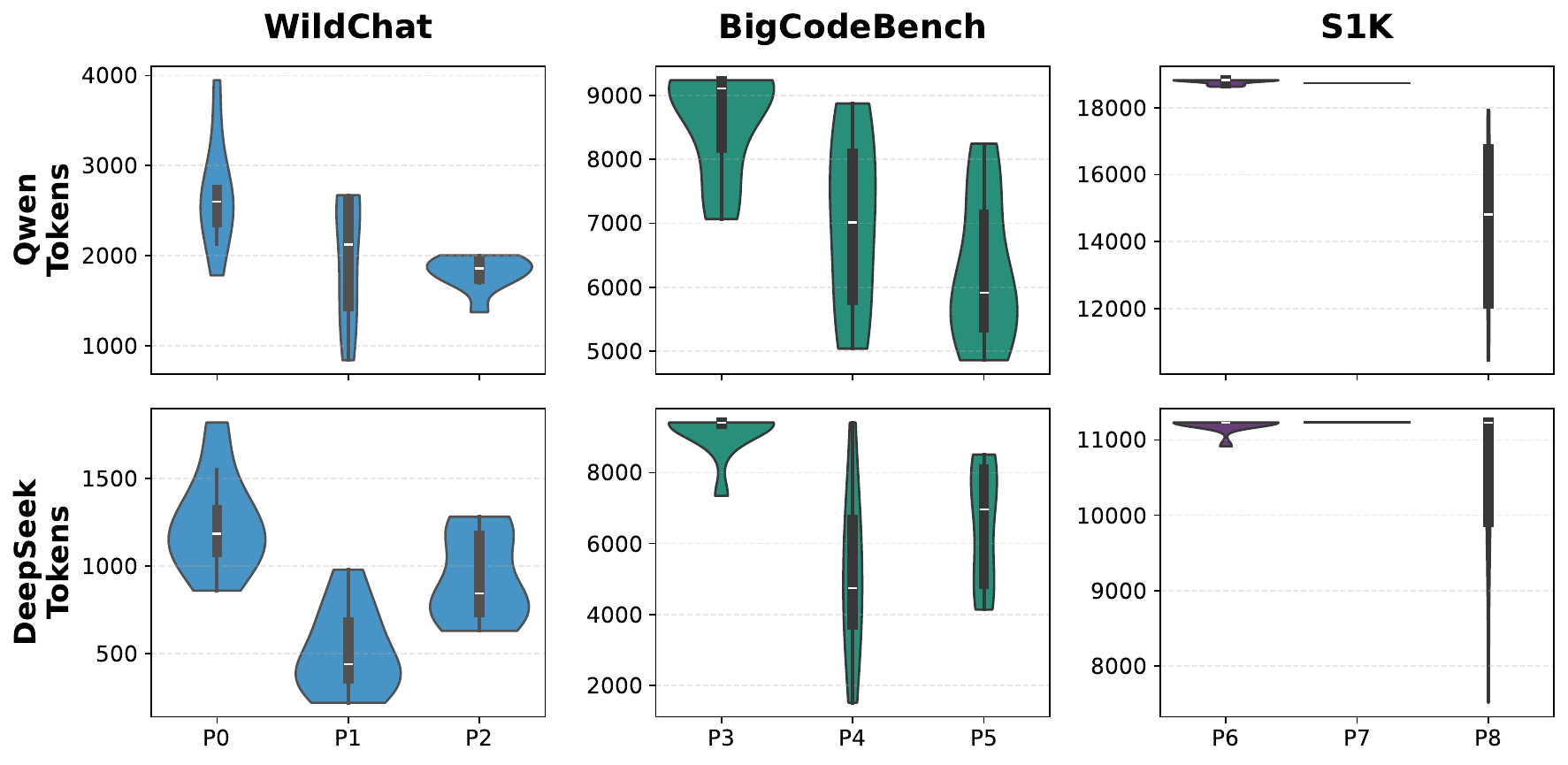}
    \caption{Output token across two models of the same randomly chosen set of prompts from three datasets, each prompt running 20 times on SGLang with sampling temperature 0.6 ~\cite{zhao2024wildchat,zhuo2024bigcodebench,s1k}.  }\vspace{-1em}
    \label{fig:motivation-variance}
\end{figure}

\begin{tcolorbox}[breakable,colback=gray!5!white, colframe=gray!75!black, 
title=\textbf{Takeaway}] 
The significant output length variance observed across all tasks---driven by both natural stochasticity in the sampling process and numerical nondeterminism--makes predicting model output length fundamentally difficult, even when using the precise same prompt and model configuration~\cite{yuan2025understanding,he2025nondeterminism}.
\end{tcolorbox}

\begin{table*}[!th]
\centering
\footnotesize
\caption{Comparison of scheduling approaches for LLM inference.
Size-based schedulers (e.g., SJF/SRPT) rely on prediction and primarily optimize mean latency,
while tail-aware approaches better capture tail end-to-end latency (TTLT).}
\label{tab:scheduler-comparison}
\begin{tabular}{lcccccc}
\toprule
\textbf{Policy / Work} 
& \textbf{Heavy-} 
& \textbf{Light-} 
& \textbf{No Size} 
& \textbf{Preemption} 
& \textbf{Tail-Aware} 
& \textbf{TTLT} \\

& \textbf{tailed} 
& \textbf{tailed} 
& \textbf{Prediction} 
& \textbf{Overhead} 
& \textbf{by Design} 
& \textbf{Evaluated} \\
\midrule

Shortest Prefix First - SPF (feature in vLLM)
& $\triangle$ 
& $\triangle$ 
& $\times$ 
& $\times$ 
& $\times$ 
& $\times$ \\

Rank-prediction-based SJF (LTR)~\cite{fu2024efficient}
&  \checkmark
& $\triangle$ 
& $\times$ 
& $\triangle$ 
& $\times$ 
& $\times$ \\

Prediction-based SRPT (TRAIL)~\cite{shahout2024don}
&  \checkmark 
& $\triangle$ 
& $\times$ 
& $\triangle$ 
& $\times$ 
& \checkmark \\

FCFS (vLLM)
& $\times$ 
& \checkmark 
& \checkmark 
& \checkmark 
& $\times$ 
& $\triangle$ \\

Skip-Join MLFQ / LAS ~\cite{wu2023fast}
& $\triangle$ 
& $\triangle$ 
& \checkmark 
& $\triangle$ 
& \checkmark 
& $\triangle$ \\

Boost / $\gamma$-Boost 
& $\times$ 
& \checkmark 
& \checkmark 
& \checkmark 
& $
\times$ 
& $\triangle$ \\

\textbf{Our Work} 
& $\triangle$ 
& \checkmark 
& \checkmark 
& \checkmark 
& \checkmark 
& \checkmark \\
\bottomrule
\end{tabular}
\end{table*}

\vspace{-0.2em}
\begin{figure*}[!t]
  \centering
  % ---------- Left subplot ----------
  \begin{subfigure}[t]{0.45\linewidth}
    \centering
    \includegraphics[width=\linewidth]{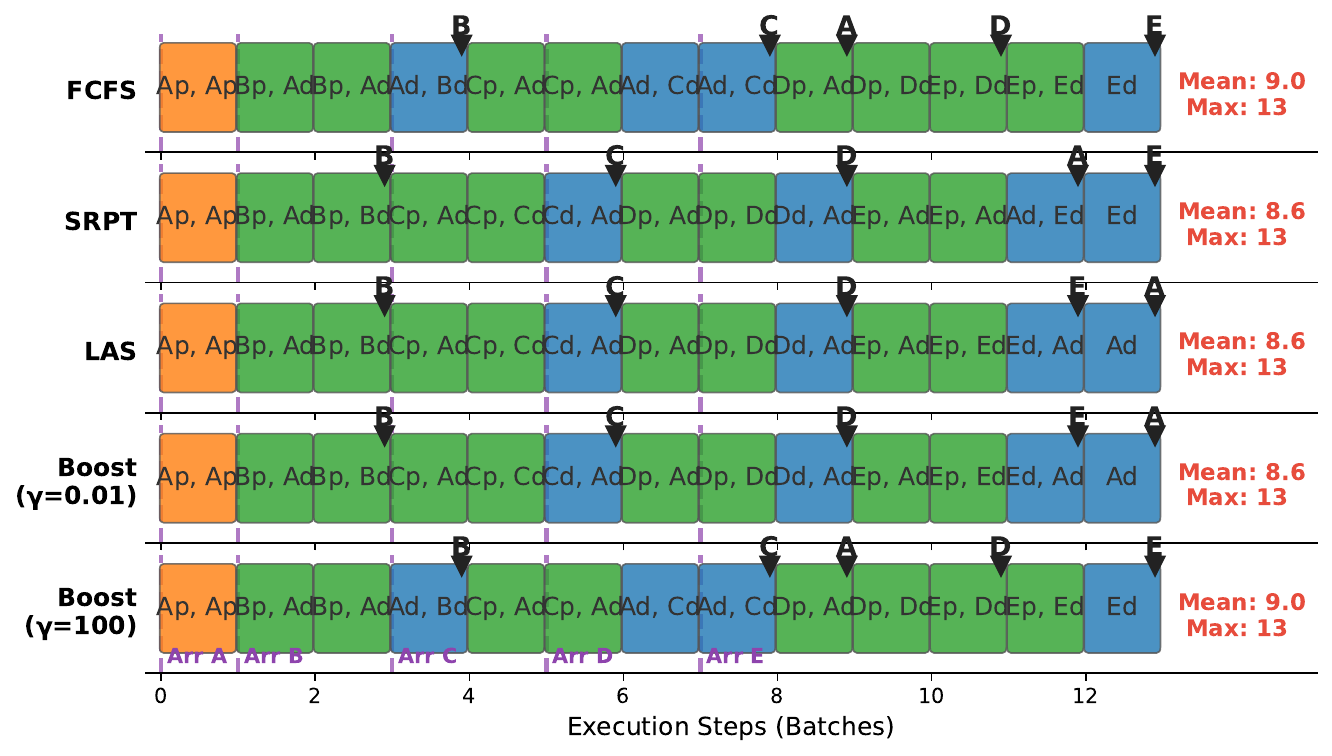}
    \vspace{-2em}
    \label{fig:toy-1}
  \end{subfigure}
  % \hfill
  % ---------- Right subplot ----------
  \begin{subfigure}[t]{0.45\linewidth}
    \centering
    \includegraphics[width=\linewidth]{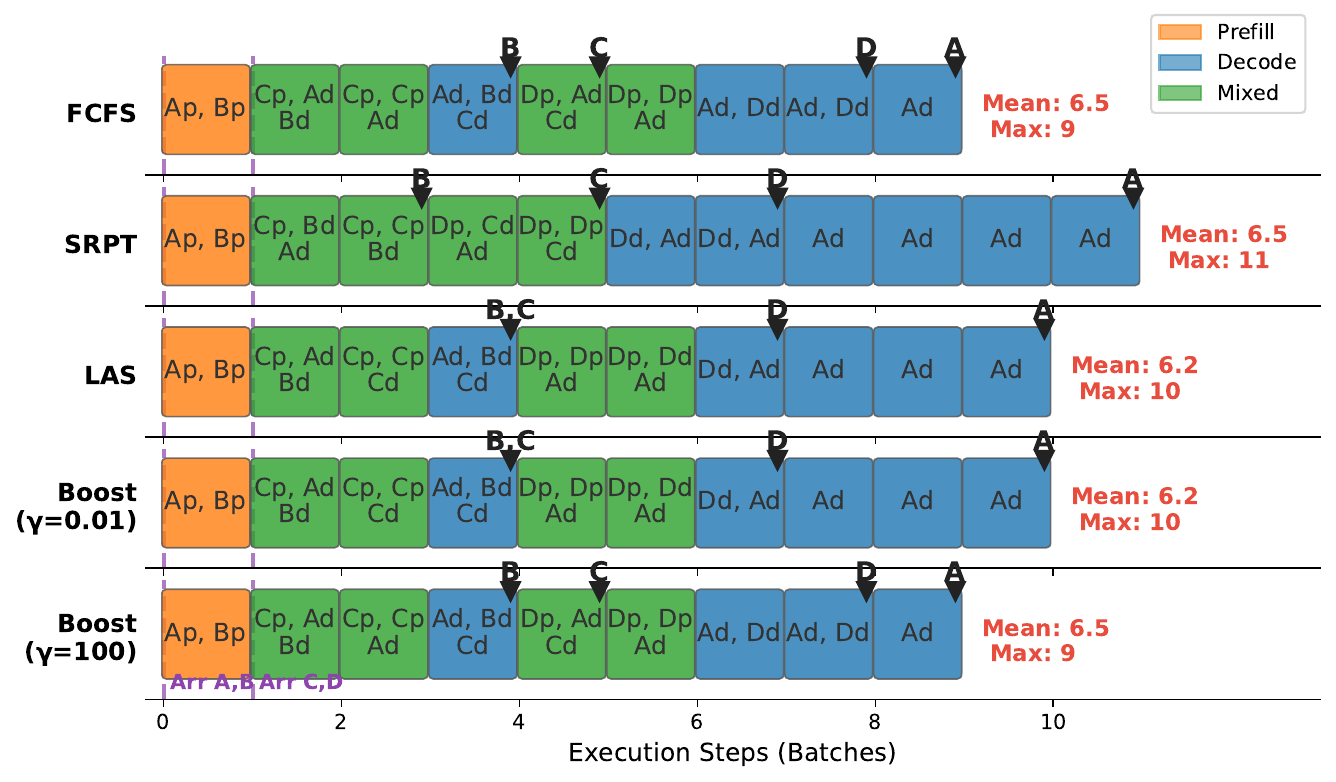}
    \vspace{-2em}
    % \caption{CPU offline inference requires carefully balancing parallelism degree (PD) and arithmetic intensity of Linear operators (AI, FLOPS/Byte). The run-time schedule can co-design the tile and workload slice dimensions with architecture decisions to optimize for offline performance.}
    \label{fig:toy-2}
  \end{subfigure}
  \vspace{-1em}
    \caption{\textbf{Scheduling Trade-offs and the Boost Scheduler.} \textbf{(Left) The Head-of-Line (HoL) Effect (FCFS weakness):} When short jobs ($B, C, D, E$) arrive behind a long request ($A$), FCFS suffers from blocking, inflating Mean Latency (9.0). SRPT and LAS preempt $A$ to service short jobs first, lowering the mean (8.6). \textbf{(Right) Preemption Penalties (SRPT weakness):} In bursty scenarios, strict prioritization by SRPT repeatedly preempts the long request ($A$), delaying its completion significantly. This harms tail latency compared to FCFS without improving Mean Latency. \textbf{No Dominant Winner:} FCFS fails on mean latency during convoys, while SRPT/LAS fail on tail latency during bursts. The \textbf{Boost Scheduler} smooths this trade-off by interpolating between these extremes using the $\gamma$ parameter; high $\gamma$ (100) mimics FCFS to protect the tail, while low $\gamma$ (0.01) mimics LAS to optimize mean latency.} \label{fig:illustration}
  \vspace{-1.2em}
\end{figure*}

\textbf{Metric Mismatch: Mean vs. Tail-Optimality in LLM Serving.}
LLM serving is typically \emph{SLO-constrained}: the sustainable load of a serving stack is set by whether it can keep \emph{tail} latencies within budget under burstiness and heterogeneous request lengths~\cite{Shinjuku,yu2022orca,zhong2024distserve,zhang2025tail,li2024libpreemptible,patel2023splitwise,goel2025niyama}.
Accordingly, \textbf{tail completion time}---\emph{Time-To-Last-Token} (TTLT) at P95/P99---directly captures user-perceived responsiveness for end-to-end tasks and often becomes the binding constraint in practice.
In contrast, focusing on \emph{mean} TTFT/TBT can be misleading.
First, token-level means do not compose: small per-token slowdowns can accumulate over long generations, yielding acceptable mean TTFT/TBT but poor P95/P99 TTLT.
Second, mean metrics hide \emph{distributional pathologies}: policies that favor short or partially served requests can cause starvation or head‑of‑line blocking, inflating tail TTLT despite benign mean TTFT.

Despite this, tail metrics (e.g., P95/P99 TTFT/TTLT) are often under-reported in evaluations of size- or prediction-driven schedulers~\cite{fu2024efficient,shahout2024don}.
More fundamentally, mean-optimality does not imply tail-robustness: when response lengths are capped and some traces become closer to light-tailed (e.g., Fig.~\ref{fig:distribution-s1k}), size-based prioritization can be \emph{arbitrarily bad}. In fact, PS and SRPT can achieve the worst possible sojourn-time decay rate (same as PLCFS) under mild conditions~\cite{nuyens2008preventing,nair2010scheduling,wierman2012tail}.

\subsection{Tail-Optimal Scheduling Policy}

\textbf{Related Work: Tail-Optimal Scheduling in Queuing Theory.}
A rich body of queueing-theoretic work studies scheduling policies that optimize tail behavior of response times. For systems with heavy-tailed job sizes, highly preemptive policies with relatively simple priority rules, such as least-attained-service, are known to achieve favorable tail properties without requiring job size information \cite{nuyens2008preventing, scully2020characterizing, scully2025does}. The case of light-tailed job sizes, however, is more subtle: recent work shows that optimizing tail latency in light-tailed queues requires a ``softer'' priority rule. For instance, \citet{yu2024strongly} propose a policy called \emph{$\gamma$-Boost} that serves jobs in order of increasing ``boosted arrival time'', which is a job's arrival time minus a ``boost'' quantity that is larger for shorter jobs. The effect is that short jobs are given some priority, but not enough to overtage longer jobs that have been waiting much longer. \Citet{harlev2025gittins} show that the same Boost design framework can be used for jobs with unknown sizes, too.

Our contribution is to \emph{implement a practical version of $\gamma$-Boost for LLM inference}. However, applying tail-optimal policies for LLM serving faces additional challenges.

\textbf{Challenge \#1: Need for Distribution-aware LLM Schedulers.}
We studied the distribution of traces in different LLM inference datasets (Fig.~\ref{fig:distribution}). To analyze scheduling taxonomies in a vLLM-style \textit{chunked-prefill, continuous-batching, decode-prioritized scheduler}, we use various reordering policies both in simulation and in the system. We find that no single scheduler dominates, as the optimal strategy is highly sensitive to the \textbf{job size distribution} and \textbf{arrival burstiness}. The high variance in job length distribution makes the scheduling problem significantly harder: policies must strictly trade off between mitigating Head-of-Line (HoL) blocking (requires preemption of longer jobs) and preventing starvation (requires protecting long jobs), to optimize for SLO attainment ratio (defined later) or tail latencies while protecting the throughput. Fig~\ref{fig:illustration} visualizes these trade-offs with colors denoting batch mixture.

In Fig.~\ref{fig:illustration} \textbf{Left} scenario, we demonstrate HoL blocking driven by job sizes. A long request $A$ (Decode=8, Prefill=2) arrives at $t=0$, delaying staggered short requests $B,C,D,E$ (Decode=1-2, Prefill=2). FCFS performs poorly here with mean latency inflation, while SRPT/LAS preempt $A$.
Conversely, the \textbf{Right} scenario (Chunk Size=3) shows how a different job size distribution can inversely harm tail latency. A burst of short jobs $B,C,D$ (Decode=2, Prefill=3) arrives simultaneously or shortly after $A$. The strict preference for short jobs by SRPT/LAS (least-attained-service) causes starvation of $A$, inflating tail (max) latency, whereas FCFS maintains better fairness. Across the two scenarios, we choose different values of $\gamma$ in the Boost policy to interpolate between these extremes and improve tail latency. 

\begin{tcolorbox}[breakable,colback=gray!5!white, colframe=gray!75!black, 
title=\textbf{Takeaway}] 
No single LLM scheduling policy dominates: the best policy depends on the \emph{job size distribution} and \emph{arrival burstiness}. Optimizing for \textbf{tail latency and throughput} requires scheduler to adaptively interpolate between two extremes, and balance HoL-blocking mitigation (preempt long jobs) against starvation protection (guard long jobs). 
\end{tcolorbox}

\paragraph{Challenge \#2: Need for Cost-aware LLM Schedulers.}

Another challenge for SRPT-style scheduling in LLM serving arises from KV-cache-coupled execution and fairness. First, SJF/SRPT may lead to starvation for long-running requests. Unlike previous fairness-promoting design~\cite{sheng2024fairness}, which focuses on fairness between different clients, LTR proposes a \maxtpot fairness metric to evaluate fairness at the per-request level to prevent starvation due to SJF~\cite{fu2024efficient}. 
One limitation is that \maxtpot is dominated by a single worst inter-token stall: transient batching/memory events can inflate \maxtpot and trigger quantum promotions even when a request is not persistently starved, which may destabilize batching/KV locality and reduce overall throughput under load. % \yueying{This seems like detail I can move later}

 During decoding, requests accumulate large key–value (KV) caches that must be preserved across preemption, making late-stage interruption costly. Systems such as Sarathi-Serve and PagedAttention focus on improving throughput–latency tradeoffs and memory efficiency, but do not directly address tail-optimal scheduling under preemption overheads \cite{agrawal2024taming,kwon2023efficient}. While some prediction-based schedulers introduce heuristics to limit preemption, they do not fundamentally mitigate the brittleness of prediction-driven priority inversion under workload variability.

\textbf{This Work: Bridging Queuing with LLM System Optimization.} Unlike prior LLM schedulers based on SJF or SRPT (e.g. LTR and TRAIL~\cite{shahout2024don,fu2024efficient}), which primarily optimize
mean end-to-end latency or mean TTFT and rely on decoding-size or length rank prediction, our approach explicitly targets
tail-latency metrics, including high-percentile TTFT and TTLT. 
\textbf{\emph{This distinction is fundamental: while SJF/SRPT are mean-optimal, they offer no guarantees on tail behavior, especially under bursty loads, diverse request input/output length distribution and continuous batching with memory constraints. } } Table~\ref{tab:scheduler-comparison} shows the key distinctions in our work w.r.t workload distribution (heavy-tailed, light-tailed), key assumptions (no size prediction) and the design (preemption-aware, tail-aware, TLLT-aware).

\section{Formulation}
\textbf{Notation.} Consider a stochastic service system (LLM engine like SGLang or vLLM) with request arrival process $\{N(t), t \geq 0\}$ governed by rate $\lambda$. Let $\mathcal{R}$ denote the set of requests, where each request $i \in \mathcal{R}$ is characterized by a tuple $(a_i, S_i, \mathbf{s}_i)$:
\begin{itemize}[topsep=0pt, itemsep=0pt, parsep=0pt, partopsep=0pt, left=0pt]
    \item $a_i \in \mathbb{R}_+$: arrival time
    \item $S_i \in \mathbb{R}_+$: intrinsic service requirement (prefill length known at arrival, decode length unknown)
    \item $\mathbf{s}_i = (s_i^{\text{pre}}, s_i^{\text{dec}}) \in \mathbb{R}_+^2$: decomposed service phases, where $S_i = s_i^{\text{pre}} + s_i^{\text{dec}}$
\end{itemize}

\begin{figure*}
    \centering
    \includegraphics[width=0.7\linewidth]{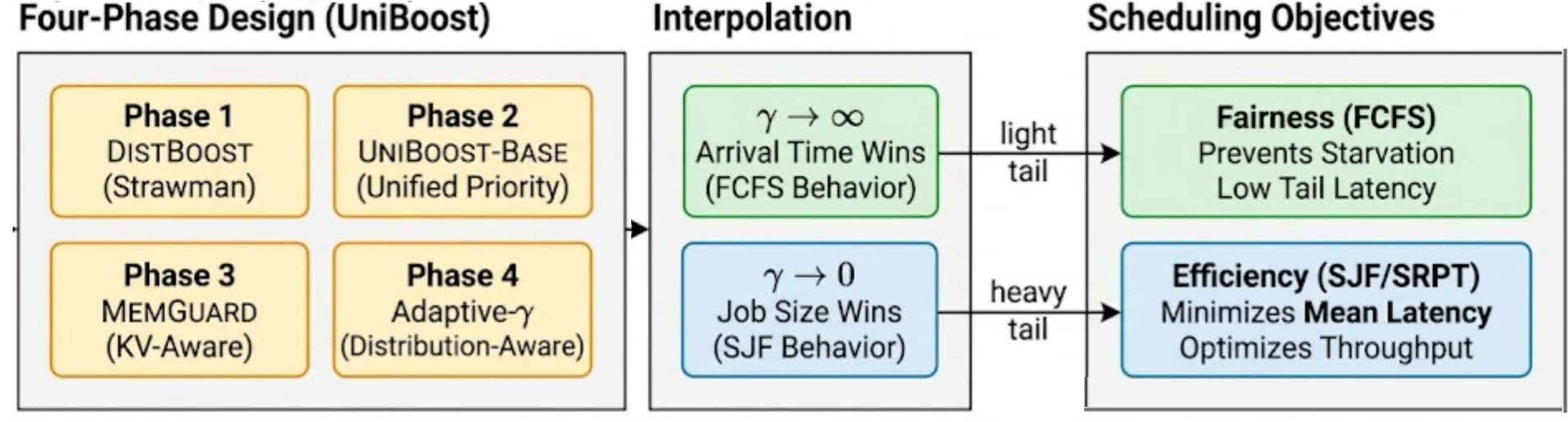}
    \caption{The four-phase design, and how it interacts with varying distributions with an asynchronous update to the $\gamma$ parameter.}
    \label{fig:overview}
\end{figure*}
Let $C_i(\pi)$ denote the completion time of request $i$ under policy $\pi$. The response time is $T_i(\pi) = C_i(\pi) - a_i$.

\textbf{Tail Optimization.} LLM serving requires both minimizing the likelihood of extreme latency events (SLO violation rate) and tail latency at large quantiles (P99, P95 latency). We adopt the rigorous definition of \textit{Tail Optimality} relative to the set of feasible policies $\Pi_{\texttt{blind}}$ that operate without exact job size information.

\begin{definition}[Tail Optimality Constant]
For a scheduling policy $\pi \in \Pi_{\texttt{blind}}$, the optimality constant is defined as: 
\begin{equation}\small
    K_\pi = \sup_{\pi^* \in \Pi_{\texttt{blind}}} \limsup_{t \to \infty} \frac{\mathbb{P}[T_\pi > t]}{\mathbb{P}[T_{\pi^*} > t]}
\end{equation}
\end{definition}
Our objective is to design a scheduling policy $\pi_{\text{boost}}$ that achieves a near-optimal tail constant for TTLT and TTFT metrics, ensuring asymptotic equivalence with the optimal blind policy even under stochastic decode lengths.

\textbf{The $\gamma$-Boost Priority Function.} To achieve strong tail optimality for response latency distribution, we employ the generalized $\gamma$-Boost score function~\cite{yu2024strongly}. For a request with attained service $w_i(t)$ at time $t$, define the \textit{boost function}:
\begin{equation}
    \label{eq:boost_function} \small
    b_\gamma(w) = \frac{1}{\gamma} \log \left( \frac{1}{1 - e^{-\gamma w}} \right), \quad \gamma > 0
\end{equation}

The priority score assigned to request $i$ at time $t$ is: $
    \label{eq:boost_basic} 
    \phi_i(t) = a_i - b_\gamma(w_i(t))
$

Under appropriate conditions on the service distribution $F_S$ and arrival rate $\lambda$, the policy that schedules the request with minimum $\phi_i(t)$ achieves $K_\pi = 1$~\cite{harlev2025gittins}, which is strongly-tail-optimal.

\begin{itemize}[nosep, topsep=0pt, itemsep=0pt, parsep=0pt, partopsep=0pt, left=0pt]
\item \textbf{Forced preemption}: Due to limited GPU memory capacity $M_{\text{GPU}}$, the active KV-cache memory consumption $\sum_{i \in \mathcal{A}(t)} m_i(t)$ is bounded. Preemption overhead is non-constant across decoding phases. % swapping incurs $O(n) $ linear cost in evicted tokens, while recomputation incurs $O(\alpha n\ell + n^2)$ quadratic and bilinear cost in evicted token and existing context ($\ell$) length.    
    \item \textbf{Parallel batch processing}: Multiple requests are served simultaneously with continuous batching and scheduling boundaries come at each chunk. The effective service rate will change, violating the M/G/1 assumption. 
    \item \textbf{Mix of heavy-tailed and light-tailed distributions}: Coding workload empirically exhibits heavy-tailed decoding length distribution, violating the conditions for tail-optimality in Boost.
\end{itemize}

% \subsection{Design Imperatives}

% To restore strong or light tail optimality ($K_\pi \to 1$ or $K_\pi < \infty$) under the practical constraints of LLM serving, we must:
% \begin{enumerate}[nosep]
%     \item Develop a \textit{preemption-aware} priority function that accounts for KV-cache swap overhead.
%     \item Introduce \textit{hysteresis mechanisms} to prevent thrashing under batched execution.
%     \item Design an \textit{adaptive $\gamma$ estimator} that tracks the empirical workload arrival and size distribution changes. 
%     \item Formulate a \textit{unified priority space} that handles heterogeneous service phases (prefill vs. decode).
% \end{enumerate}

We therefore require a scheduler that is simultaneously (i) tail-aware and prediction-free, (ii) robust to workload/burstiness shifts, (iii) cache-/memory-aware under KV-capacity constraints, and (iv) low-overhead to implement in a production continuous-batching pipeline. Our design diagram is shown in Fig. \ref{fig:overview}.

\subsection{Phase 1: Prefill-Boosted Strawman \textsc{DistBoost}}
\label{subsec:distboost}
Inspired by Sarathi's chunked prefill, our strawman approach, \textsc{DistBoost}, treats the prefill and decode phases as disjoint scheduling problems.

\textbf{Formulation.} Partition the request set into two queues: $Q_p(t) = \{i : w_i(t) < s_i^{\text{pre}}\}$ (prefill) and $Q_d(t) = \{i : w_i(t) \geq s_i^{\text{pre}}\}$ (decode). Define separate priority functions:
\begin{align}
    \phi_i^{\text{pre}}(t) &= a_i - b_\gamma(s_i^{\text{pre}}) \quad \forall i \in Q_p(t), \label{eq:prefill_priority} \\
    \phi_i^{\text{dec}}(t) &= a_i^{\text{dec}} \quad \forall i \in Q_d(t), \label{eq:decode_priority}
\end{align}
where $a_i^{\text{dec}}$ is the time request $i$ enters the decode queue.

The scheduler operates in batched iterations with capacity $B_{\max}$. At each iteration, it selects:
\begin{enumerate}[nosep]
    \item \textbf{Decode batch}: Select up to $B_{\max}$ requests from $Q_d(t)$ ordered by $\phi_i^{\text{dec}}(t)$:
    \begin{equation}
        \mathcal{B}_d(t) = \{i_1^*, \ldots, i_k^*\} \text{ where } k = \min(|Q_d(t)|, B_{\max})
    \end{equation}
    \item \textbf{Prefill piggyback}: If $k < B_{\max}$, fill remaining slots with chunked prefill requests ordered by $\phi_i^{\text{pre}}(t)$, where each prefill is chunked to size $C$ tokens:
    \begin{equation}
        \mathcal{B}_p(t) = \arg\min_{\substack{S \subseteq Q_p(t) \\ |S| \leq B_{\max} - k}} \sum_{i \in S} \phi_i^{\text{pre}}(t)
    \end{equation}
    Each prefill request $i \in \mathcal{B}_p(t)$ processes at most $C$ tokens per iteration, up to max-num-of-tokens per iteration. The final batch is $\mathcal{B}(t) = \mathcal{B}_d(t) \cup \mathcal{B}_p(t)$.

\end{enumerate}

\textbf{Limitations.} The split-phase architecture can exhibit inter-queue head-of-line (HOL) blocking. If the decode queue $Q_d(t)$ contains many long-running decode jobs, a burst of prefill requests with $s_{i_j}^{\text{pre}}\ll\mathbb{E}[s^{\text{dec}}]$ may be delayed because the bandwidth is occupied by decode. Simple heuristics (e.g., promoting certain prefill requests) can reduce this effect, but adaptive, provably safe prioritization is hard to maintain under rapidly changing workloads without risking fairness or throughput degradation.

\subsection{Phase 2: Unified Prioritization (\textsc{UniBoost-Base})}
\label{subsec:combined}

To overcome the challenge of cross-phase HoL caused by the Strawman approach, we unify $Q_p$ and $Q_d$ into a single priority space that allows preemption and promotion across phases.
Define the \textit{effective work} metric: $\tilde{w}_i(t) = \max(w_i(t), s_i^{\text{pre}})$.
The global priority for any request $i$ at time $t$ is: $
    \Phi(i, t) = a_i - b_\gamma(\tilde{w}_i(t)) $

The scheduler selects $i^* = \arg\min_{i \in \mathcal{A}(t)} \Phi(i, t)$, where $\mathcal{A}(t) = Q_p(t) \cup Q_d(t)$ is the active set.

\textbf{Preemption Protocol.} When a new high-priority request arrives (or an existing request's priority changes), the system may preempt the currently executing request $i_{\text{curr}}$ if:
\begin{equation}
    \exists j \in \mathcal{A}(t) : \Phi(j, t) < \Phi(i_{\text{curr}}, t) - \delta_{\text{hyst}}
\end{equation}
where $\delta_{\text{hyst}} > 0$ is a hysteresis parameter (detailed in Section~\ref{subsec:memguard}).

\subsection{Phase 3: Stability via \textsc{MemGuard} (KV-Aware Hysteresis)}
\label{subsec:memguard}

Fine-grained priority updates (per-token or per-microbatch) can force frequent context switches when KV-cache capacity is tight. Each switch may evict and reload large KV state, or recomputation. This motivates discretizing when priorities may change to amortize KV movement costs.

\paragraph{\textsc{MemGuard}: quantized priorities + minimum-run hysteresis.}
We introduce \textsc{MemGuard}, a KV-aware stabilization layer that reduces preemption frequency by discretizing \emph{when} a request's priority may change.
Let $w_i$ be the attained decode work (decoded tokens) of request $i$, and let $k$ be a granularity parameter (chosen to align with the KV block size).
We define a geometric quantization:
\begin{equation}
\label{eq:memguard_level}
\hat{w}_i \;=\;
k \cdot 2^{\left\lfloor \log_2\left(\max\{w_i,k\}/k\right)\right\rfloor}
\in \{k,2k,4k,\ldots\}.
\end{equation}
The scheduler recomputes $\Phi(i)$ only when $\hat{w}_i$ changes (i.e., when $w_i$ crosses a threshold).
Moreover, once a request is dispatched, it is \emph{non-evictable} until it reaches the next threshold, ensuring a minimum quantum of useful service
to amortize KV movement costs.
\begin{figure}
    \centering
    \includegraphics[width=0.75\linewidth]{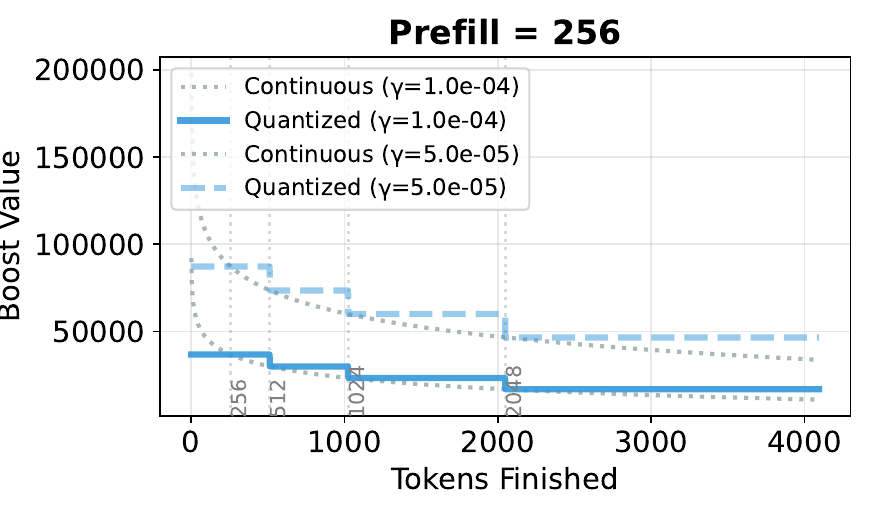} \vspace{-1em}
    \caption{An illustration of MemGuard behavior on the boost function. While $\gamma$ is controlled in the outer loop, the more tokens finished, the more "staleness" we assign to that request's hysteresis. } \vspace{-1em}
    \label{fig:illu-gamma}
\end{figure}
\paragraph{Implication: logarithmic preemption opportunities.}
Because thresholds grow geometrically, request $i$ can trigger at most
$1+\lfloor \log_2(S_i/k)\rfloor$ priority-revision points before completion, yielding a logarithmic bound on the number of times it can be reconsidered for preemption as a function of its decode length $S_i$. Fig.~\ref{fig:illu-gamma} shows an illustration with $k=256$.
For a 10k decode length, the number is at most 6 swap opportunities under $k=256$. 
\subsection{Phase 4: Adaptive Parameter Estimation ($\gamma$-Ada)}
\label{subsec:adaptive}

The optimal boost parameter $\gamma^*$ in Equation~\ref{eq:boost_function} depends on the arrival rate $\lambda$, server utilization $\rho$, and the tail index of the job size distribution. A static or arbitrarily chosen $\gamma$ under varying load and distribution leads to sub-optimal tail behavior. %\yueying{Check consistent name: service time or job size or token length}. 

\textbf{Online Estimation.} When the observed response time $T$ exhibits a lighter tail, we estimate an \emph{effective} exponential tail rate over the upper tail band
$[t_{95}, t_{99}]$ by fitting a log-linear slope: $
\ln \bar{F}_T(t) \approx a - \gamma_{\mathrm{eff}} t,\quad t\in[t_{95},t_{99}],
$
yielding
\begin{equation}
    \label{eq:gamma_estimator}
    \hat{\gamma}_{t+1} = -\frac{\ln \bar{F}_T(t_{99}) - \ln \bar{F}_T(t_{95})}{t_{99} - t_{95}}
\end{equation} 
This $\gamma_{\mathrm{eff}}$ should be interpreted as a local tail-slope summary statistic. Here $t_p = \inf\{t : \bar{F}_T(t) \leq (100-p)/100\}$ is the empirical $p$-th percentile latency. 

Putting these together, we summarize \textsc{UniBoost} and its algorithm in Appendix Algo~\ref{alg:uniboost}.
% Robustness across different model setups
%% 1. Different CP/TP , Mixed Chunk with different chunk sizes, P/D disaggregation
%% 2. Different Model (GQA, MHA, MLA). Thinking or non-thinking
%% 3. Different QPS and request pattern
%% 4. Different workloads.
%%% 4.1 With reasoning mixture, 

\section{Evaluation}
\begin{table*}[!htbp]
\centering
\caption{Relative performance compared to TRAIL (baseline). Green indicates improvement, yellow indicates minor degradation ($\le15\%$), orange indicates moderate degradation ($\le50\%$), red indicates severe degradation ($>50\%$). }
\small      \label{tab:boost-trail-comp}
\begin{tabular}{lrrrrrrrrrr}
\toprule
& \multicolumn{3}{c}{\textbf{End-to-End Latency}} & \multicolumn{3}{c}{\textbf{TTFT}} & \multicolumn{3}{c}{\textbf{TBT}} & \textbf{Throughput} \\
\cmidrule(lr){2-4} \cmidrule(lr){5-7} \cmidrule(lr){8-10} \cmidrule(lr){11-11}
\textbf{Scheduler} & Mean & P95 & P99 & Mean & P95 & P99 & Mean & P95 & P99 & tok/s \\
\midrule
\textbf{TRAIL+} & \cellcolor[HTML]{ffffff}+0.0\% & \cellcolor[HTML]{ffffff}+0.0\% & \cellcolor[HTML]{ffffff}+0.0\% & \cellcolor[HTML]{ffffff}+0.0\% & \cellcolor[HTML]{ffffff}+0.0\% & \cellcolor[HTML]{ffffff}+0.0\% & \cellcolor[HTML]{ffffff}+0.0\% & \cellcolor[HTML]{ffffff}+0.0\% & \cellcolor[HTML]{ffffff}+0.0\% & \cellcolor[HTML]{ffffff}+0.0\% \\
\textbf{SJF} & \cellcolor[HTML]{ffe699}\textbf{-11.1\%} & \cellcolor[HTML]{ffe699}\textbf{-12.5\%} & \cellcolor[HTML]{85cdaa}\textbf{+11.2\%} & \cellcolor[HTML]{ff9999}\textbf{-74.6\%} & \cellcolor[HTML]{ff9999}\textbf{-70.5\%} & \cellcolor[HTML]{ffe699}\textbf{-9.7\%} & \cellcolor[HTML]{ffe699}\textbf{-5.2\%} & \cellcolor[HTML]{ffe699}\textbf{-7.1\%} & \cellcolor[HTML]{ffe699}\textbf{-6.3\%} & \cellcolor[HTML]{ffe699}-7.9\% \\
\textbf{Sarathi} & \cellcolor[HTML]{ffe699}\textbf{-12.3\%} & \cellcolor[HTML]{ffe699}\textbf{-13.2\%} & \cellcolor[HTML]{ffe699}\textbf{-7.6\%} & \cellcolor[HTML]{ffc266}\textbf{-49.9\%} & \cellcolor[HTML]{ffc266}\textbf{-40.5\%} & \cellcolor[HTML]{ffe699}\textbf{-8.1\%} & \cellcolor[HTML]{ffe699}\textbf{-6.8\%} & \cellcolor[HTML]{ffe699}\textbf{-8.4\%} & \cellcolor[HTML]{ffe699}\textbf{-7.9\%} & \cellcolor[HTML]{ffe699}-10.9\% \\
\textbf{DistBoost} & \cellcolor[HTML]{ffe699}\textbf{-14.0\%} & \cellcolor[HTML]{dcf1e6}\textbf{+6.0\%} & \cellcolor[HTML]{57bb8a}\textbf{+37.1\%} & \cellcolor[HTML]{ffc266}\textbf{-22.8\%} & \cellcolor[HTML]{ffc266}\textbf{-35.0\%} & \cellcolor[HTML]{ffe699}\textbf{-3.4\%} & \cellcolor[HTML]{ffe699}\textbf{-14.1\%} & \cellcolor[HTML]{ffe699}\textbf{-10.2\%} & \cellcolor[HTML]{85cdaa}\textbf{+18.3\%} & \cellcolor[HTML]{dcf1e6}+1.1\% \\
\textbf{UniBoost} & \cellcolor[HTML]{ffc266}\textbf{-19.1\%} & \cellcolor[HTML]{dcf1e6}\textbf{+1.1\%} & \cellcolor[HTML]{57bb8a}\textbf{+35.1\%} & \cellcolor[HTML]{57bb8a}\textbf{+52.1\%} & \cellcolor[HTML]{57bb8a}\textbf{+97.4\%} & \cellcolor[HTML]{57bb8a}\textbf{+34.0\%} & \cellcolor[HTML]{ffc266}\textbf{-25.3\%} & \cellcolor[HTML]{ffe699}\textbf{-8.1\%} & \cellcolor[HTML]{57bb8a}\textbf{+33.8\%} & \cellcolor[HTML]{dcf1e6}+1.2\% \\
\bottomrule
\end{tabular}
\end{table*}

\textbf{Experimental Setups.} We compare \textsc{UniBoost} against state-of-the-art baselines and ablate each design component. \textsc{UniBoost} achieves the best overall trade-off, reducing tail TTFT/TTLT by \(37\%{-}60\%\) across various workloads while improving throughput by \(1.01{-}1.12\times\) under bursty load.

% In this section, we evaluate our proposed method against several state-of-art baselines and evaluate the effectiveness of each algorithmic addition layer-by-layer. We show that our proposed \textit{UniBoost} can achieve state-of-the-art performance in terms of both mean and tail TTLT and TTFT latency and throughput. In short, we achieved \(37\% - 60\%\) lower tail latency across reasoning and chat workloads and \(1.01-1.12\times\) higher throughput under bursty load generator. 

\textbf{Testbed.} The end-to-end evaluation testbed is a NVIDIA-A100 DGX server with 8 NVIDIA A100 80GB GPUs, 96 vCPUs, and 248GB host memory. The backends are using paged-attention and chunk prefill by default. We disabled the radix cache and prefix caching to do a controlled study on the scheduler. 

\textbf{Serving Models.} We use a variety of model families including Llama-3-8B, CodeLlama-34B and QWen-72B. For QWen-72B we use TP=4 and CP=4. To stress memory-constrained settings, we use CodeLlama-34B under TP=1. To stress dynamic load, we use Llama-3-8B with dynamic QPS scaled from Azure coding trace~\ref{fig:gamma-adaptation}. All experiments use FP16/BF16 precision.

\textbf{Workloads.}
We evaluate using the Azure Conversation and ShareGPT~\cite{sharegpt,AzurePublicDatasetData} for conversation and s1k~\cite{s1k} datasets for reasoning. Azure traces also include each request's timestamp, which we scale to preserve the arrival bursty behavior. For workloads without timestamps, we evaluate with Poisson arrival with different QPS. We sample 10k requests for evaluation to reduce warm-up and tear-down impact.  We also created two mixed-type workloads, when an LLM engine is receiving both reasoning and non-reasoning requests (mix-1 has 20\% reasoning and mix-2 has 70\% reasoning, the rest sampled from Azure function, or AZF traces). 

\textbf{Baselines.}
We compare with several state-of-the-art baselines: \textbf{1) Sarathi}: Chunk-prefill, decode-prioritized continuous batching, with round-robin ordering in the decode and piggy-back prefills. This policy is integrated in the latest release of vLLM (v1) and SGLang (v0.5.8) for its favorable latency-throughput tradeoffs; \textbf{2) SJF (LTR+)}: LTR with perfect prediction. Improved SJF with LTR's starvation prevention techniques using quantum control~\cite{fu2024efficient}. %For each scheduling step, we increase the request's starvation count if it is not executed. When a request's starvation count reaches a pre-defined threshold, we will promote this request's priority by allocating `quantum' to this request. 
\textbf{3) LAS (MLFQ+)}: least-attained service: this is a variant of non-clairvoyant Foreground-Background policy commonly used in operating systems, evolving from discrete MLFQ with finite levels to its idealized continuous form. However, its performance doesn't have a guarantee in light-tailed distribution~\cite{scully2018soap}. \textbf{4) SRPT (TRAIL+)}: TRAIL with perfect prediction. Idealized version of SRPT with TRAIL's swap-prevention threshold after 0.6 decoding length. This policy, like FastServe's Skip-join MLFQ (MLFQ+), is mean-optimal but not tail under the assumption of heavy-tailed distribution~\cite{shahout2024don}.  % It is known as a non-clairvoyant scheduler that approximates SRPT in near-optimal mean response time, especially under heavy-tailed job size distribution, even in overload conditions~\cite{analysis-las,wu2023fast}; 
\begin{figure*}[t]
    \centering
    % First Subplot: End-to-End Latency
    \includegraphics[width=0.9\linewidth]{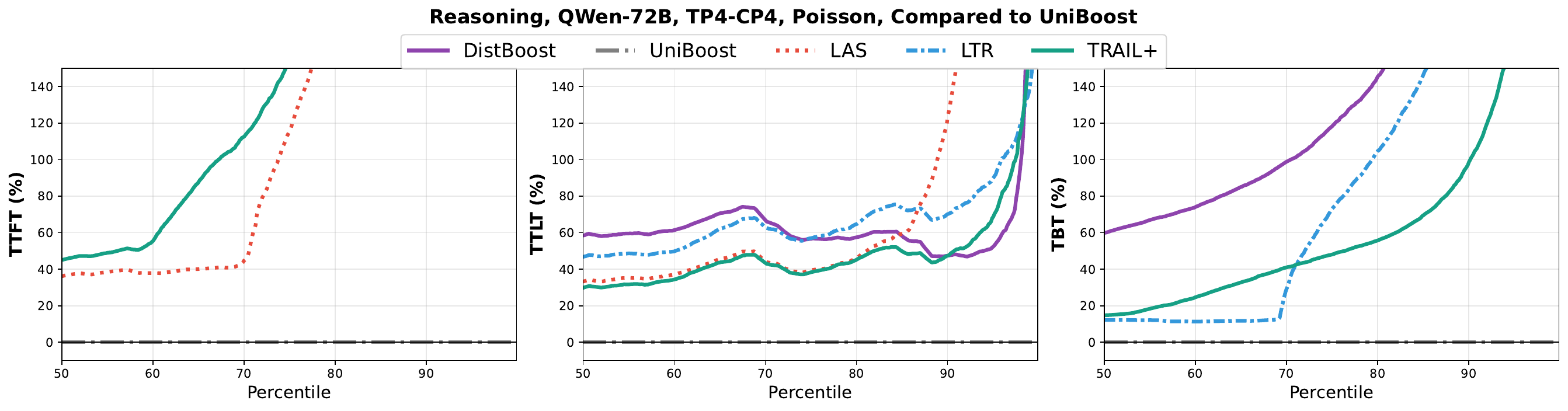}
    \vspace{-.7em}
    \caption{(Lower is Better) Each curve is a baseline's per-percentile slowdown \emph{relative to} \textsc{UniBoost},
$100\%\times\!\big(M_\pi(p)/M_{\textsc{UniBoost}}(p)-1\big)$, where $M_\pi(p)$ is the
$p$-th percentile latency under policy $\pi$ and $p$ sweeps P50 to P99.9 (x-axis), shown
for end-to-end latency (TTLT), time-to-first-token (TTFT), and time-between-tokens (TBT). Every policy uses continuous batching with chunked-prefill (chunk size=1024) under high ($\rho = 0.99$) system-attainable load. \textsc{UniBoost} demonstrates significant latency reduction at the tail.}
     \vspace{-1.2em} \label{fig:continuous_improvement_reason}
\end{figure*}

% ================ Workload 1 - Pure Reason ==========

\subsection{Comparison Over Baseline Policies}
To quantify scheduler performance, Table~\ref{tab:boost-trail-comp} computes relative improvements compared to 
TRAIL+ (with perfect knowledge of decoding length) as the oracle baseline. The setup is using Llama-8B with a mixture of prompts of varying lengths, 70\% from reasoning tasks and 30\% from Azure traces. Percentage changes are computed as
$
\text{latency: }\frac{\text{baseline}-\text{value}}{\text{baseline}}\times100\%,\ \ 
\text{throughput: }\frac{\text{value}-\text{baseline}}{\text{baseline}}\times100\%.
$
Positive values indicate improvement over TRAIL+.

We observe from the table that while all schedulers exhibit some degradation in latency metrics compared to TRAIL+, \textsc{UniBoost}  trades off some mean latency in this workload but demonstrates the most substantial gains across all major tail metrics, especially in P99 TTLT and TTFT.  \textsc{DistBoost}, the strawman version of our approach, also shows good improvements, particularly in P99 TTLT, but degrades in TTFT, the reason being that it prioritizes decoding queue over prefill. Sarathi shows decreases in all metrics, and SJF, while still improving over SRPT in TTLT tail, experience pronounced degradation because of its lack of preemption to reclaim KV memory for shorter requests, and its natural focus for heavy-tailed workloads. Overall, \textsc{UniBoost} stands out as the most effective scheduler in this comparison, delivering significant performance enhancements across the board, improving P99 TTLT around 35\% and P95 TTFT around 97\%, without sacrificing throughput. 

\subsection{Analysis and Robustness of Improvement Across  Workloads}

% \begin{figure*}[h]
%     \centering

%     \includegraphics[width=\linewidth]{Sections/graphs/combined_metrics_mix1.pdf}
%     \caption{(Higher is Better) Percentile latency (median P50 to tail P99.9) latency improvement of End-to-End Latency (TTLT), Time to First Token (TTFT) and TBT compared with the Sarathi policy. The baseline uses continuous batching with chunked prefill (chunk size=1024) under a busty load scaled from production arrival traces~\cite{AzurePublicDatasetData}. UniBoost demonstrates significant latency reduction at the tail.}
%     \vspace{-1.2em}
%     \label{fig:continuous_improvement_heavy_conv}
% \end{figure*}

Fig.~\ref{fig:continuous_improvement_reason} plots the \emph{percent slowdown} of each baseline compared to \textsc{UniBoost} on a reasoning workload with QWen-72B. Three consistent patterns emerge.

\textbf{(1) TTFT is where most baselines lose first, and the loss accelerates in the upper tail.}
This behavior is consistent with a \emph{prefill admission} pathology: when prefill is not explicitly protected, prefill work gets repeatedly delayed by decode-heavy microbatches, so any burst of long-running (reasoning) decodes increases the waiting time before a request can even start producing the first token.
\textsc{UniBoost} avoids this by enforcing a distribution-aware priority that prevents prefill from being perpetually deferred by decode occupancy, which is especially important when reasoning requests create long decode residency times.

\textbf{(2) TTLT gaps come from different tail failure modes: instability under long decodes vs.\ convoying.}
In the middle panel, all baselines remain above $0\%$ across percentiles, but they diverge sharply near the tail: \textsc{LAS} or SRPT-like policy spikes dramatically in the high percentiles.
This is a classic \emph{attained-service inversion} under memory-coupled decoding: LAS prioritizes jobs with smaller attained service, so newly-arrived (or recently-unblocked) requests repeatedly jump ahead of partially-served long decoding requests. It creates more \emph{simultaneously active} requests, increasing paging/eviction pressure and causing additional stalls.
The result is a positive feedback loop: more interleaving $\rightarrow$ more KV pressure $\rightarrow$ longer residency $\rightarrow$ less throughput and more queuing $\rightarrow$ worse tail TTLT.
\textsc{UniBoost} avoids this by smoothly interpolating between serving earlier arrivals and shorter jobs, reducing reshuffles and tail oscillations.

% because each active decode pins KV blocks, this increases concurrent active requests, KV resi pressure and stalls, producing a positive feedback loop that inflates TTLT. \textsc{UniBoost} avoids this by smoothly interpolating between serving earlier arrivals and shorter jobs, reducing reshuffles and tail oscillations.

\textbf{(3) TBT shows decode-prioritization alone is insufficient: mixing increases variance.}
In the right panel, \textsc{DistBoost} is uniformly slow, while \textsc{LTR} stays flat then degrades after $\sim$P70. Two mechanisms explain this phenomenon: (i) over-prioritizing decodes raises the number of concurrent decoders, increasing per-token contention that harms TBT broadly; (ii) thresholded/length-ranked switching (LTR) is brittle—once a load/mixture threshold is crossed it flips regimes, boosting mixing and heterogeneity and triggering tail blow-ups.

\textbf{Takeaway.}
\textsc{UniBoost} remains near the Pareto frontier because it controls the prefill--decode balance in each iteration and the degree of interleaving with a single smooth control knob, which stabilizes both TTFT admission and tail TTLT/TBT under long reasoning decodes. 

\vspace{-1.2em}

% ================ Workload 2 - Mixed Reason with Chat+reason (70% s1k) ==========

\begin{figure*}[!ht]
    \centering
    \includegraphics[width=0.8\linewidth]{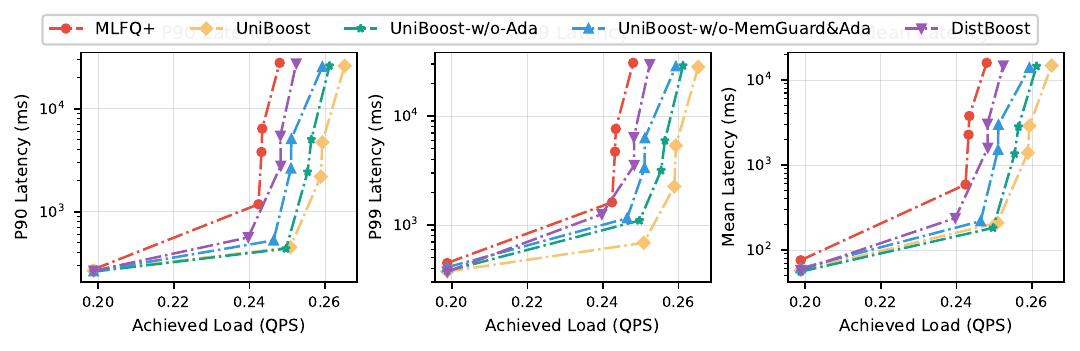}
    \vspace{-0.82em}
    \caption{Design-phase ablation of \textsc{UniBoost} using load-latency curves. CodeLlama-34B is served with TP$=1$ on a single NVIDIA A100 80GB GPU under a memory-constrained QPS sweep, so each curve isolates the contribution of one design phase: \textsc{DistBoost} (decode only), Phase-2 (unified priority), Phase-3 ($+$\textsc{MemGuard} cache-aware preemption), and the full \textsc{UniBoost} ($+\,\gamma$-Ada adaptive estimation); the panels report mean, P90 and P99 tail latency as offered load increases. Each phase pushes the saturation knee to higher QPS and reduces KV-swap thrashing, with the full \textsc{UniBoost} reducing P90/P99/mean latency by up to $10\times$ near saturation relative to the MLFQ+ baseline.}
    %\vspace{-2em}
    \label{fig:load-latency-mixture2}
\end{figure*}

\subsection{Ablation of Each Design Phase}

Fig.~\ref{fig:load-latency-mixture2} sweeps QPS with Codellama-34B on 1 GPU. The load–latency curve exhibits a sharp knee: Among all the variants, \textsc{DistBoost} (purple) destabilizes first (0.24 QPS), with P90/P99 rising from hundreds of ms to seconds. Changing it to Phase-2 (blue) delays the knee and roughly halves P99 near 0.248 QPS. Adding \textsc{MemGuard} in Phase-3 (green) further suppresses KV-swap thrashing, cutting tails by another 1.5-2$\times$. \textsc{UniBoost} (orange) performs best: at 0.255 QPS P99 remains sub-second while alternatives are multi-second. Overall, \textsc{UniBoost} shifts the knee right by ~4–6\% and reduces P90/P99/mean by up to 10× near saturation compared with baseline MLFQ+.
These plots jointly illustrate the effectiveness on \textsc{MemGuard} and \textsc{Ada} mechanism, especially in protecting high-load latency. The exact timeline of $\gamma$ change is shown in Appendix Fig.~\ref{fig:gamma-adaptation}.

\subsection{SLO Attainment Across Scales and Datasets}
For LLM service provider, the goal is to find the maximum SLO scale the system can tolerate while still achieving the attainment target (99\% or 95\%). For TTFT, {CombinedBoost} (\textsc{UniBoost} interchangeably) maintains highest attainment across the widest range of SLO scales. In contrast, \textsc{SJF} exhibits the lowest tolerance to stringent SLOs, with TTFT attainment degrading rapidly as the SLO tightens. On average, 2.9--8.7 $\times$  more stringent SLO can be achieved with \textsc{UniBoost}, with mixture-2 slightly showing larger gap due to its more dynamic nature. For TTLT, it is more interesting to note that although there is a general 1.7--4.3$\times$ gain in SLO threshold that can be achieved with 99\% attainment, DistBoost does a little better, by trading off the TTFT attainment to prioritize decoding.

\begin{figure}[t]
    \centering
    % First Subplot: End-to-End Latency
    \begin{subfigure}[b]{\linewidth}
        \centering
        \includegraphics[width=\linewidth]{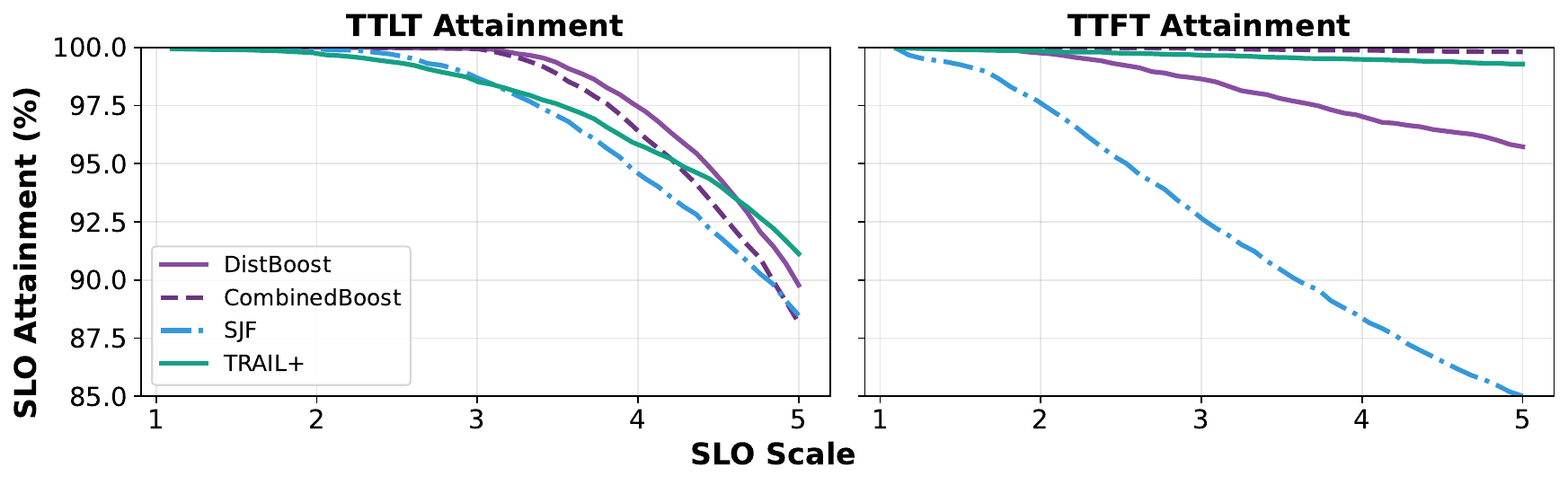}\vspace{-1em}
        \caption{With chat+reasoning dataset mixture 2.}
        \label{fig:distribution-s1k}
    \end{subfigure}

%% Distribution Plot
    \begin{subfigure}[b]{\linewidth}
        \centering
        \includegraphics[width=1\linewidth]{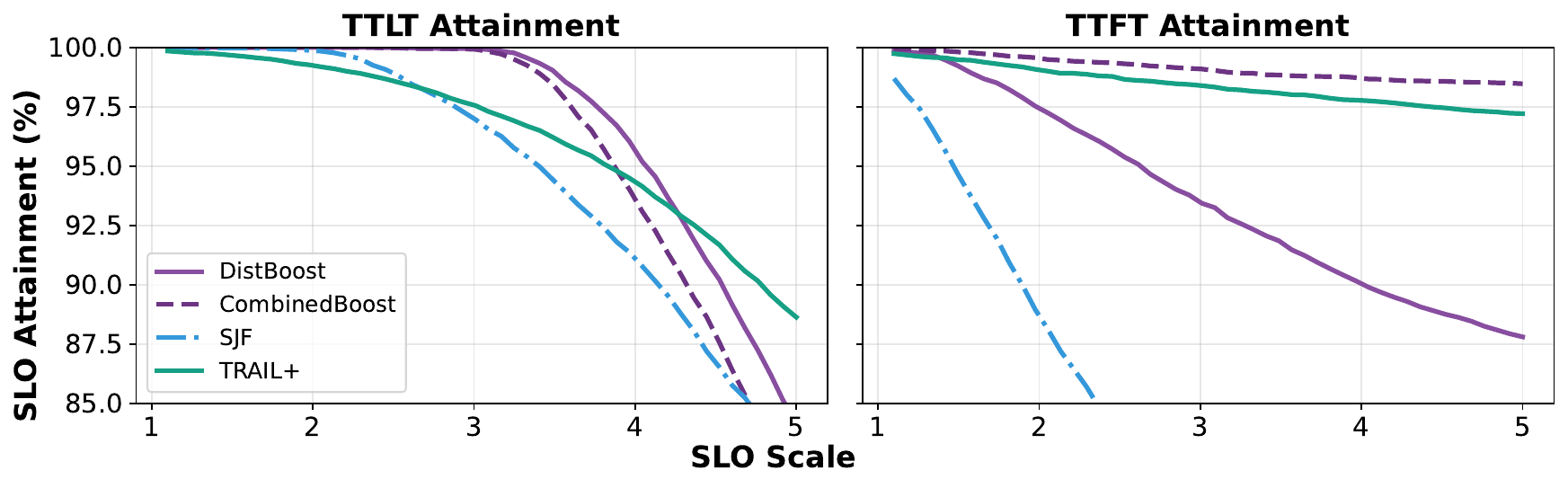}\vspace{-1em}
        \caption{With reasoning dataset.}
        \label{fig:distribution-mix}\vspace{-.6em}
    \end{subfigure}
    \caption{(Larger is better) SLO attainment with different SLO scales. Note: Base SLO scale=1 is chosen with Sarathi's P95 latency under 0.9 max load. X-axis is in reciprocal scale ($\text{SLO}=\text{Base}/x$). }\vspace{-2em} \label{fig:slo-attainment}
\end{figure}

\section{Conclusion and Future Work}
We presented \textsc{UniBoost}, a prediction-free, tail-aware scheduling framework for LLM serving that uses soft, continuous priority shaping and co-designs scheduling with KV-cost–aware preemption to improve tail latency under diverse workloads. Inspired by theory, it shows superior performance on tail under heavy load, and even beats prediction-based policy with only a lightweight runtime statistic tracker. The implementation is simple to integrate into existing serving stack with continuous batching, p/d disaggregation or chunk-prefilled systems like vLLM and SGLang. The results are shown in Appendix $\S$\ref{sec:integration}.

Future work includes extending \textsc{UniBoost} to multi-model and multi-replica deployments (joint routing + caching + scheduling), incorporating richer memory signals, developing formal guarantees that capture preemption/eviction costs in modern LLM inference systems, and leveraging recent design insights on optimizing tail latency in heavy-tailed queues \cite{li2026split} to better handle reasoning workloads with highly variable output lengths.

\section*{Acknowledgments}

% TODO: add any applicable funding
Yueying Li and Edward Suh were supported by the NSF under grant CCF-2118709. Yueying Li acknowledges Anvil AI and
GPU allocations through allocation CIS230253 from the Advanced Cyberinfrastructure Coordination
Ecosystem: Services \& Support (ACCESS) program, which is supported by U.S. National Science
Foundation grants \#2138259, \#2138286, \#2138307, \#2137603, and \#2138296. Udit Gupta was supported by NSF Grant CCF-232660 and CCF-2326608, and acknowledges support from Google and Amazon. Ziv Scully was supported by the NSF under grant nos.\ CMMI-2307008 and CCF-2544452.

\section*{Impact Statement}

This paper presents work aimed at improving the efficiency of LLM inference.
There are many potential societal consequences of increasing LLM efficiency, none of which we feel must be specifically highlighted here.

%%
%% The next two lines define the bibliography style to be used, and
%% the bibliography file.
\bibliographystyle{ACM-Reference-Format}
\bibliography{main}
\appendix
\section{Appendix}

\vspace{0.25em}
\subsection{Algorithm details}
\textbf{The full implementation of \textsc{UniBoost} and its helper functions (sketch) is shown below.}

\begin{algorithm*}[th]
\caption{\textsc{UniBoost}: Adaptive Tail-Aware Scheduling with \textsc{MemGuard}}
\label{alg:uniboost}
\begin{algorithmic}[1]
\Require KV capacity $B$, micro-batch cap $M$, chunk size $c$, bin size $k$, initial $\gamma$
\State Maintain two heaps: $\mathcal{Q}_{\textsf{pre}}$ (prefill-ready), $\mathcal{Q}_{\textsf{dec}}$ (decode-ready), keyed by \textbf{priority} $\pi(\cdot)$
\State Maintain lightweight runtime stats $\mathcal{S}$ (e.g., recent tail quantiles)
\While{server is running}
    \State $\gamma \gets \textsc{UpdateGamma}(\mathcal{S})$ \Comment{adaptive tail controller}
    \ForAll{active request $r$ (waiting or running)}
        \State $s_r \gets \textsc{Signal}(r,\mathcal{S})$ \Comment{prediction-free size signal (phase-aware)}
        \State $\tilde{s}_r \gets \textsc{Quantize}(s_r,k)$ \Comment{\textsc{MemGuard}: geometric bins}
        \State $b_r \gets \textsc{Boost}(\tilde{s}_r,\gamma)$
        \State $\pi(r) \gets a_r - b_r$ \Comment{$a_r$ = arrival time (or phase-entry time); smaller $\pi$ served earlier}
        \State \textsc{UpdateKey}($r,\pi(r)$) in the appropriate heap ($\mathcal{Q}_{\textsf{pre}}$ or $\mathcal{Q}_{\textsf{dec}}$)
    \EndFor

    \State $\mathcal{B}\gets \emptyset$, $U\gets$ current KV usage
    \While{$|\mathcal{B}| < M$}
        \State $r \gets \arg\min \{\textsc{Top}(\mathcal{Q}_{\textsf{pre}}),\,\textsc{Top}(\mathcal{Q}_{\textsf{dec}})\}$ \Comment{unified ordering across phases}
        \If{$r$ is None} \State \textbf{break} \EndIf
        \State $\Delta U \gets \textsc{KVNeed}(r,c)$ \Comment{KV needed for next prefill chunk or decode step}
        \If{$U+\Delta U \le B$}
            \State \textsc{Pop}($r$) from its heap; add $r$ to batch $\mathcal{B}$; $U\gets U+\Delta U$
        \Else
            \State $v \gets \textsc{SelectVictim}(\text{running set}, \pi,\text{swap-cost})$
            \State \textsc{SwapOut}($v$); $U \gets U - \textsc{KVResident}(v)$
            \Comment{\textsc{MemGuard} ensures each $r$ triggers $\le 1+\lfloor \log_2(S_r/k)\rfloor$ priority updates/swaps}
        \EndIf
    \EndWhile

    \State \textsc{ExecuteOneStep}($\mathcal{B},c$) \Comment{run prefill chunks + decode step; update KV + token counters}
    \State \textsc{UpdateStats}($\mathcal{S}$, observed TTFT/TTLT/TBT)
\EndWhile
\end{algorithmic}
\end{algorithm*}

\begin{algorithm*}[!t]

\begin{algorithmic}[1]
\Function{Boost}{$x,\gamma$} \Comment{$\gamma$-boost (any soft boosting curve works)}
    \State \Return $\frac{1}{\gamma}\log\!\left(\frac{1}{1-e^{-\gamma x}}\right)$
\EndFunction
\Function{Quantize}{$s,k$} \Comment{geometric bins anchored at $k$}
    \State \Return $k\cdot 2^{\lfloor \log_2(\max\{1,s/k\})\rfloor}$
\EndFunction
\Function{Signal}{$r,\mathcal{S}$} \Comment{prediction-free, phase-aware}
    \If{$r$ in prefill} \State \Return prompt tokens (known) 
    \Else \State \Return attained decode tokens 
    \EndIf
\EndFunction
\Function{UpdateGamma}{$\mathcal{S}$}
  \Require recent E2E latency samples $\mathcal{L}$ (window), smoothing $\beta$, bounds $[\gamma_{\min},\gamma_{\max}]$, $\epsilon$
  \State $\hat{x}_{95} \gets \textsc{Quantile}(\mathcal{L},0.95)$;\quad $\hat{x}_{99} \gets \textsc{Quantile}(\mathcal{L},0.99)$
  \State $\Delta \gets \max(\hat{x}_{99}-\hat{x}_{95},\epsilon)$ \Comment{avoid instability if tail compresses}
  \State $\hat{\gamma} \gets (\log(99) -  \log(95)) / \Delta$ 
  \State $\gamma \gets (1-\beta)\cdot \gamma + \beta \cdot \hat{\gamma}$ \Comment{EMA smoothing}
  \State \Return $\mathrm{clip}(\gamma,\gamma_{\min},\gamma_{\max})$
\EndFunction

\end{algorithmic}
\end{algorithm*}

% \begin{figure*}[b]
%     \centering
%     \includegraphics[width=1\linewidth]{Sections/graphs/splitwise_arrival_gamma.pdf}
%     \caption{Bursty arrival in the Azure Function Traces and the corresponding gamma changes across time (scaled). }
%     \label{fig:gamma-adaptive}
% \end{figure*}

\subsection{Theoretical guarantee of no starvation}

We adopt notation and terminology similar to that of \citet{yu2024strongly}. Consider any moment in time, which we will call time~0. The \emph{crossing work} for time~0, denoted~$V$, is, roughly, the amount of work that ``boosts past time~0''. Specifically, $V$~is the sum of, for each request~A that arrives after time~0, the amount of processing time~A receives during which it has boosted arrival time before time~0.

The reason $V$~is relevant to starvation is that (a variant of) the argument in Section~3 of \citet{yu2024strongly} implies, roughly speaking, that the worst-case scenario for any given request's latency under Boost is to have the same latency as under FCFS, plus at most~$V$. (See their Lemma~3.3 and eqs.~(3.2--3.3), though these have additional terms that turn out to be negligible but obscure the aforementioned takeaway.) This means $V$~bounds the worst-case degradation of Boost compared to FCFS, even for the largest jobs. Specifically, in order for $V$~to not significantly degrade tail performance, we need $\mathbf{E}[e^{\gamma V}]$ to be finite. Lemma~3.5 of \citet{yu2024strongly} shows that for size-based boost functions, this holds if the boost function~$b(s)$ is (weakly) decreasing and satisfies $b(s) \leq O(s^{-1})$ as $s \to 0$. Below, we show that nearly the same result holds for attained-service-based boost functions.

\begin{lemma}
Consider a Boost policy in an M/G/1 queue that gives a job of attained service~$a$ boost at most~$b(a)$. Suppose $b(a)$~is (weakly) decreasing and that there exists $\epsilon > 0$ such that $b(a) \leq O(a^{-(1 - \epsilon)})$ as $a \to 0$. Then $\mathbf{E}[e^{\gamma V}] < \infty$.
\end{lemma}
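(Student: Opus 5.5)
We will bound $V$ by a sum over arrivals after time~0 of the per-job crossing work. Then we control the moment generating function of that sum using the Poisson arrival structure, mirroring the proof of Lemma~3.5 of \citet{yu2024strongly}.

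\textbf{Step 1: per-job crossing work.} Fix a request arriving at time $u > 0$. At any moment when it has attained service~$a$, its boosted arrival time is at least $u - b(a)$. This quantity is before time~0 only if $b(a) > u$. Because $b$ is weakly decreasing, the set $\{a : b(a) > u\}$ is an initial interval $[0, \beta(u))$, where $\beta(u) := \sup\{a : b(a) > u\}$ is a generalized inverse of~$b$. So the processing this request receives while it is boosted past time~0 is at most $\min(S, \beta(u))$, where $S$ is its size. This is exactly where attained-service boosts differ from size-based ones. A size-based boost either fully crosses (contributing all of~$S$) or not at all. Here the contribution is truncated at $\beta(u)$, which is small when $u$ is large.

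\textbf{Step 2: integrability of the tail of $\beta$.} The hypothesis $b(a) \le C a^{-(1-\epsilon)}$ for small~$a$ gives $\beta(u) \le (C/u)^{1/(1-\epsilon)}$ for all large~$u$. The exponent $p := 1/(1-\epsilon) > 1$, so $\int_{u_0}^\infty \beta(u)\,du < \infty$. This is the key quantitative input. It is precisely why we need $\epsilon > 0$ rather than the borderline $O(a^{-1})$ allowed for size-based boosts, since $\int u^{-1}\,du$ diverges.

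\textbf{Step 3: MGF via Poisson.} We have $V \le \sum_{n} \min(S_n, \beta(U_n))$ over Poisson arrival epochs $U_n > 0$ with i.i.d.\ sizes $S_n$. Campbell's formula then gives
\[
\log \mathbf{E}\bigl[e^{\gamma V}\bigr] \le \lambda \int_0^\infty \Bigl(\mathbf{E}\bigl[e^{\gamma \min(S, \beta(u))}\bigr] - 1\Bigr)\,du .
\]
We split the integral at a large~$u_0$.
\begin{itemize}
\item For $u < u_0$, the integrand is at most $\mathbf{E}[e^{\gamma S}]$. This is finite under the light-tail assumption on~$S$ that underlies the $\gamma$-Boost setting, since $\gamma$ is below the decay rate of $S$.
\item For $u \ge u_0$, we have $\beta(u) \le 1$. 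Then $e^{\gamma x} - 1 \le \gamma e^{\gamma} x$ for $x \le 1$ bounds the integrand by $\gamma e^{\gamma}\beta(u)$, which is integrable by Step~2.
\end{itemize}
Hence $\mathbf{E}[e^{\gamma V}] < \infty$.

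\textbf{Main obstacle.} The delicate step is Step~1, specifically justifying that the crossing work of job~$A$ is bounded by $\min(S, \beta(u))$ rather than by something depending on how long $A$ waits. Attained service only increases, so once $A$'s boosted arrival time passes time~0 it never returns before~0. Monotonicity of~$b$ therefore makes the crossing interval an initial segment of $A$'s service; we need to check this carefully. A second point to check is how $V$ is precisely defined under the phrase ``boost at most~$b(a)$''. If boosts are only bounded above, the same monotone-envelope argument still applies, since a smaller boost can only shrink the crossing set.
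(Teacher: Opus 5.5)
Your proof is correct and follows the paper's argument: the per-job crossing bound $\min(S, b^{-1}(t))$ and Campbell's formula are exactly the paper's steps, and both proofs rely on $\mathbf{E}[e^{\gamma S}]<\infty$; your generalized inverse $\beta$ simply replaces the paper's assumption that $b$ is invertible without loss of generality. The only difference is in the final estimate: you bound the time integral directly via $\int_{u_0}^\infty \beta(u)\,\mathrm{d}u<\infty$, whereas the paper applies Tonelli to rewrite it as $\mathbf{E}\bigl[\int_0^S \gamma e^{\gamma u} b(u)\,\mathrm{d}u\bigr]$ and splits at $u=1$ --- these measure the same area under the graph of $b$ near $0$, along different axes.
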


Before giving the proof sketch, we note that the boost function precondition applies to UniBoost, because it assigns a job of attained service~$a$ a boost of at most
\[
b_\gamma\bigl(\tfrac{1}{2} a\bigr) = \frac{1}{\gamma} \log \frac{1}{1 - e^{-\frac{1}{2} \gamma a}} \leq O(\log a^{-1}) \leq O(a^{-0.01}).
\]
The assumption of an M/G/1 queue is admittedly limiting, but M/G/1 analysis of Boost policies is the state-of-the-art in queueing theory, so this type of result is the best one can hope for at the moment. We expect the argument to extend to the M/G/k, which is closer to a good model of LLM service with continuous batching. But much less is known about Boost in the M/G/k \cite{yu2025tale}, so the argument would go beyond a simple extension of a known result, and it would not account for the idiosyncrasies of the parallelizable prefill phase.

\begin{proof}[Proof sketch]
We assume without loss of generality that $b(a)$~has an inverse~$b^{-1}(t)$ (as one can perturb the ``flat parts'' of~$b(a)$ to make it invertible). We start by following most of the proof of Lemma~3.5 of \citet{yu2024strongly}, but with one key change:
\begin{itemize}
    \item In their setting of size-based boosts, a job of size~$s$ that arrives at time~$t$ contributes $s \, \mathbf{1}(s < b^{-1}(t))$ work towards~$V$.
    \item But in our setting of attained-service-based boosts, a job of size~$s$ and attained service~$a$ that arrives at time~$t$ contributes $\min(s, b^{-1}(t))$ work towards~$V$.
\end{itemize}
Applying Campbell's and Tonelli's theorems as in their proof thus yields, writing $S$~for the size of a random arrival and $\lambda$~for the arrival rate,
\[
\mathbf{E}[e^{\gamma V}] = \exp\biggl(\lambda\, \mathbf{E}\biggl[ \int_0^\infty \bigl(e^{\gamma \min(S, b^{-1}(t))} - 1\bigr) \,\mathrm{d} t \biggr]\biggr).
\]

From here, it is just a matter of simplifying the integral, which uses the precondition $b(a) \leq O(1/a^{1 - \epsilon})$ and the known fact that $\mathbf{E}[e^{\gamma S}] < \infty$. This part is quite different from Yu and Scully's proof, so we give the full computation:
\allowdisplaybreaks
\begin{align*}
\mathbf{E}[e^{\gamma V}]
&= \exp\biggl(\lambda \, \mathbf{E}\biggl[ \int_0^\infty \bigl(e^{\gamma \min(S, b^{-1}(t))} - 1\bigr) \,\mathrm{d} t \biggr]\biggr) \\
&= \exp\biggl(\lambda \, \mathbf{E}\biggl[ \int_0^\infty \int_0^{\min(S, b^{-1}(t))} \gamma e^{\gamma u} \,\mathrm{d} u \,\mathrm{d} t \biggr]\biggr) \\
&= \exp\biggl(\lambda \, \mathbf{E}\biggl[ \int_0^\infty \int_0^S \gamma e^{\gamma u} \,\mathbf{1}(u < b^{-1}(t)) \,\mathrm{d} u \,\mathrm{d} t \biggr]\biggr) \\
&= \exp\biggl(\lambda \, \mathbf{E}\biggl[ \int_0^S \int_0^\infty \gamma e^{\gamma u} \,\mathbf{1}(t < b(u)) \,\mathrm{d} t \,\mathrm{d} u \biggr]\biggr) \\
&= \exp\biggl(\lambda \, \mathbf{E}\biggl[ \int_0^S \gamma e^{\gamma u} \, b(u) \,\mathrm{d} u \biggr]\biggr) \\
&\leq \exp\biggl(\lambda \, \mathbf{E}\biggl[ \int_0^1 \gamma e^\gamma \, b(u) \,\mathrm{d} u + \int_0^S \gamma e^{\gamma u} \, b(1) \,\mathrm{d} u \biggr]\biggr) \\
&\leq \exp\biggl(\lambda \gamma e^\gamma \int_0^1 O(u^{-(1 - \epsilon)}) \,\mathrm{d} u + \lambda \, b(1) \, \mathbf{E}[e^{\gamma S} - 1]\biggr) \\
&< \infty.
\qedhere
\end{align*}
\end{proof}

\subsection{Starvation Experiment}

\paragraph{Workload.}
90\% short requests (${\sim}125$ prefill, 100 decode) + 10\% long requests (2k prefill, 6k decode). Poisson arrivals at $\rho \approx 0.99$. Llama-3-8B on 8 H100 with random load balancing policy.

\begin{table}[h]
\centering
\caption{Short-request E2E latency (seconds).}
\label{tab:short-e2e}
\begin{tabular}{lcccccc}
\toprule
 & Mean & P50 & P90 & P95 & P99 \\
\midrule
UniBoost & 2.80 & 2.85 & 3.28 & 3.41 & 3.62 \\
Sarathi  & 2.83 & 2.87 & 3.35 & 3.50 & 3.84 \\
SJF      & 2.80 & 2.86 & 3.27 & 3.39 & 3.62 \\
SRPT     & 2.80 & 2.86 & 3.27 & 3.39 & 3.69 \\
\bottomrule
\end{tabular}
\end{table}

\begin{table}[h]
\centering
\caption{Long-request E2E latency (seconds).}
\label{tab:long-e2e}
\begin{tabular}{lccccc}
\toprule
 & Mean & P90 & P95 & P99 & Max \\
\midrule
UniBoost & 88.5 & 185.5 & 190.2 & 200.5 & 245.9 \\
Sarathi  & 88.5 & 184.8 & 190.0 & 199.7 & 205.5 \\
SJF      & 88.4 & 182.8 & 188.0 & 206.9 & 253.9 \\
SRPT     & 88.3 & 183.9 & 188.6 & 196.5 & 246.3 \\
\bottomrule
\end{tabular}
\end{table}

We also counted the preemption time of longer requests to capture starvation statistics:

\begin{table}[h]
\centering
\caption{Long-request preemption P99 (seconds).}
\label{tab:preemption-p99}
\begin{tabular}{lc}
\toprule
 & P99 \\
\midrule
UniBoost & 9.1 \\
Sarathi  & 6.6 \\
SJF      & 13.3 \\
SRPT     & 18.6 \\
\bottomrule
\end{tabular}
\end{table}

\paragraph{Key Takeaways. }

For each scheduler, we fit a linear regression of long-request E2E latency against request arrival order (Request~ID). A scheduler exhibits starvation if later-arriving long requests experience systematically worse latency than earlier ones---i.e., the queue ``remembers'' and penalizes long requests over time.

\begin{enumerate}
    \item \textbf{Aligned with theory intuition}: slopes are negligible (${\sim}0.0003$~s/req), confirming long-request latency does not grow with arrival order (Table~\ref{tab:long-e2e}). Under $\rho < 1$, no scheduler exhibits starvation---consistent with the three-term decomposition bound where crossing work~$W_j^{\text{cross}}$ has exponentially decaying tails.

    \item \textbf{UniBoost matches baselines in fairness, excels in tail latency for shorter requests}: UniBoost achieves comparable long-request E2E latency (Table~\ref{tab:long-e2e}) and preemption overhead (Table~\ref{tab:preemption-p99}) to SRPT and SJF---schedulers that require knowing job sizes \emph{a priori}---without any size oracle, while delivering competitive short-request tail latency (Table~\ref{tab:short-e2e}). This makes it practical for autoregressive LLM serving where decode length is unknown at arrival.
\end{enumerate}

\subsection{Hyperparameter Sensitivity}

UniBoost has three main hyperparameters; their roles are orthogonal:

\begin{itemize}
    \item \textbf{Adaptive $\gamma$} controls the fairness--responsiveness tradeoff (FCFS-like vs.\ short-job-first-like). We note that it is already self-tuned in our design (Section~3.4 and Figure~\ref{fig:gamma-adaptation}), and the tuning frequency is adjusted with respect to the workload and arrival rate (by collecting sufficient samples). We observe stable performance across a broad range of starting values, provided we allow adaptive tuning in response to changes in workload distribution.

    \item \textbf{Bin size $k$} (Section~3.3) is the granularity that controls stability vs.\ agility in reprioritization; larger $k$ causes less thrashing, especially under higher load, but may lead to longer residence and less agility for smaller requests to chime in ($k{=}1$ is just the vanilla boost). We chose a value of 256 due to the GPU's internal tile structure (Table~\ref{tab:sensitivity-k}). This is orthogonal to chunk size of the scheduler as it only controls the memory backend stickiness.

    \item \textbf{Hysteresis $\delta$} (Section~3.2) prevents pathological micro-preemption. Empirically, we find that this works well under a broad range of values (Table~\ref{tab:sensitivity-delta}), likely due to the $k$ we already set.
\end{itemize}

We further provide sensitivity studies on these parameters in Tables~\ref{tab:sensitivity-k} and~\ref{tab:sensitivity-delta}.

\begin{figure*}[h]
\centering 
\includegraphics[width=0.7\textwidth]{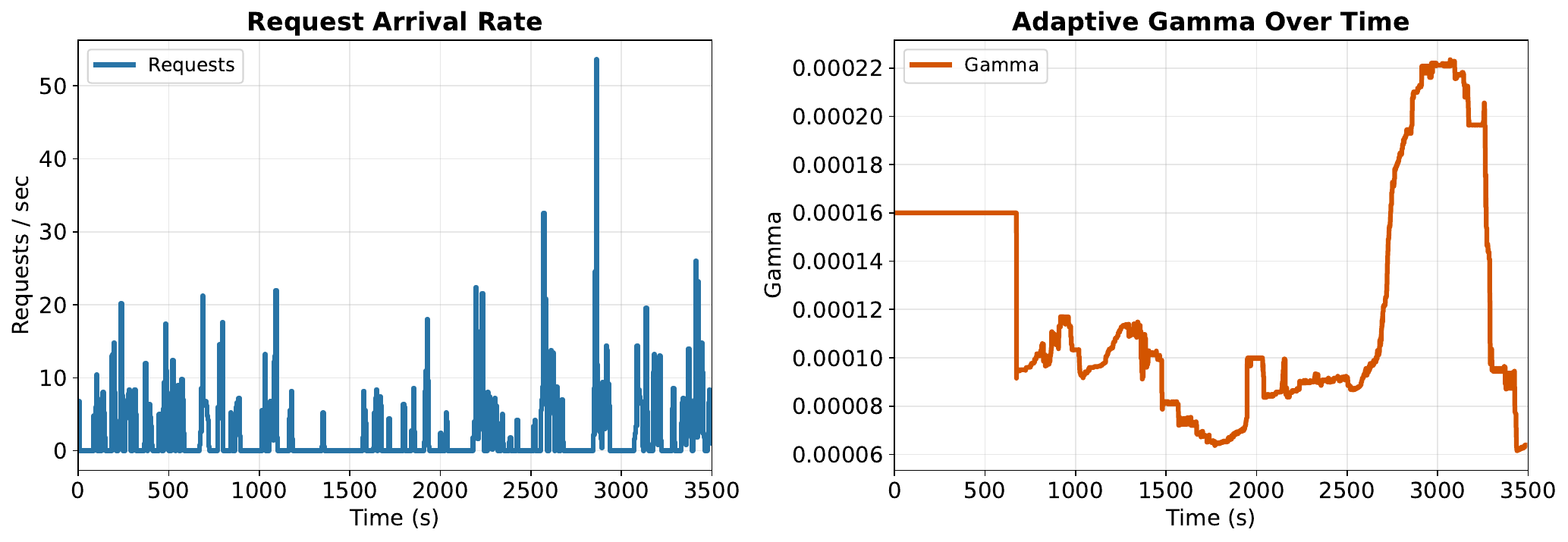}
\caption{Bursty arrival in the Azure Function Traces and the corresponding $\gamma$ changes across time (scaled).}
\label{fig:gamma-adaptation}
\end{figure*}

\begin{table*}[h]
\centering
\caption{Sensitivity to bin size $k$.}
\label{tab:sensitivity-k}
\begin{tabular}{lccccc}
\toprule
$k$ & P99 E2E (ms) & P99 TTFT (ms) & P99 TBT (ms) & Mean TBT (ms) & Throughput (req/s) \\
\midrule
64 tokens        & 8530 & 392 & 112 & 55 & 3.94 \\
256 tokens (default) & 8512 & 390 & 115 & 54 & 3.95 \\
512 tokens       & 8681 & 403 & 128 & 54 & 3.93 \\
1024 tokens      & 8465 & 450 & 137 & 53 & 3.90 \\
\bottomrule
\end{tabular}
\end{table*}

\begin{table*}[h]
\centering
\caption{Sensitivity to hysteresis $\delta$.}
\label{tab:sensitivity-delta}
\begin{tabular}{lccccc}
\toprule
$\delta$ & P99 E2E (ms) & P99 TTFT (ms) & P99 TBT (ms) & Mean TBT (ms) & Throughput (req/s) \\
\midrule
0 (none)           & 9500 & 385 & 105 & 72 & 3.80 \\
0.1 (default)      & 8512 & 390 & 115 & 54 & 3.95 \\
0.3                & 8561 & 392 & 122 & 55 & 3.94 \\
0.5 (aggressive)   & 8575 & 394 & 121 & 54 & 3.94 \\
\bottomrule
\end{tabular}
\end{table*}

\subsection{Interaction with other system optimization} \label{sec:integration}

\paragraph{Prefix/Radix Caching}

Caching is outside the scope of this work, but UniBoost can be extended straightforwardly to include prefix caching. Caching primarily reduces \emph{effective prefill work}. If a request has $h_i$ cached tokens, we simply replace the formula in effective work as
\[
\tilde{w}_i(t) = \max\!\big(w_i(t),\; s_i^{\mathrm{pre}} - h_i\big),
\]
where $w_i(t)$ is the total number of tokens processed so far. Thus the UniBoost implementation and its validity remain unchanged; we only need to plug in the expected hit tokens (which is usually the sum of tokens from earlier turns for a multi-turn workload).

\paragraph{Chunked Prefill and Continuous Batching}

Our design already adopts continuous batching and chunked prefill (Lines~341--345): prefill is treated as incremental service that contributes to the same unified signal~$\tilde{w}_i(t)$. Each chunk advances the effective work and therefore updates priority consistently with decode. The chunk size is typically set based on the desired TBT/TPOT SLO, so we use 1024 for CodeLlama. We provide additional chunk values in Table~\ref{tab:chunk-sensitivity} to demonstrate that our algorithm performs well across any SLO regime.

Service with continuous batching is similar to the classical M/G/$k$ model from queueing theory: there are $k$~slots, and the jobs in each slot can be swapped in and out independently of each other. There is theory about the Boost policy for the M/G/$k$~\cite{yu2025tale}, but it is limited, especially for jobs with unknown or partially known sizes. What little theory exists suggests that the Boost policy framework is effective in the M/G/$k$, though the details of how to optimally assign boost amounts are are not fully known.

\begin{table*}[h]
\centering
\caption{Sensitivity to chunk size. Values are relative change (\%) vs.\ the FCFS baseline.}
\label{tab:chunk-sensitivity}
\begin{tabular}{lcccccccccc}
\toprule
Chunk size & \multicolumn{3}{c}{E2E} & \multicolumn{3}{c}{TTFT} & \multicolumn{3}{c}{TBT} & tok/s \\
\cmidrule(lr){2-4} \cmidrule(lr){5-7} \cmidrule(lr){8-10}
 & Mean & P95 & P99 & Mean & P95 & P99 & Mean & P95 & P99 & \\
\midrule
128            & $-$17.8 & +2.4  & +38.6 & +12.3 & +18.5 & +8.2  & $-$22.1 & $-$5.9 & +36.4 & +0.6 \\
512            & $-$18.7 & +1.5  & +36.0 & +35.8 & +58.1 & +22.3 & $-$24.5 & $-$7.5 & +34.8 & +1.0 \\
\textbf{1024}  & \textbf{$-$19.1} & \textbf{+1.1} & \textbf{+35.1} & \textbf{+52.1} & \textbf{+97.4} & \textbf{+34.0} & \textbf{$-$25.3} & \textbf{$-$8.1} & \textbf{+33.8} & \textbf{+1.2} \\
2048           & $-$19.4 & +0.8  & +34.5 & +68.4 & +93.6 & +45.1 & $-$26.0 & $-$8.6 & +33.2 & +1.3 \\
\bottomrule
\end{tabular}%

\end{table*}

\paragraph{Composability with Global Dispatch Policies}

Cluster-level routers (e.g., load balancers, global schedulers) decide which instance a request is sent to; UniBoost decides how to schedule requests \emph{within} that instance. When the routing policy is exogenous with respect to the GPUs' states---which may not hold exactly for a load-aware policy but is a reasonable approximation when the number of servers is large---the per-instance scheduling guarantees of UniBoost carry over directly. Moreover, combining UniBoost with a stronger global policy can yield further SLO improvements in an iso-resource setting.

Table~\ref{tab:global-dispatcher} reports the relative improvement of UniBoost over the vLLM baseline (vLLM v1.0---Sarathi with chunked-prefill continuous batching) under three routing strategies on 64 H100 GPUs.

\begin{table*}[h]
\centering
\caption{Relative improvement of UniBoost over vLLM under different global schedulers (64$\times$ H100).}
\label{tab:global-dispatcher}
\begin{tabular}{llcccccc}
\toprule
Global Scheduler & Scheduler & Avg E2E & P90 E2E & P99 E2E & Avg TBT & P90 TBT & P99 TBT \\
\midrule
Random      & UniBoost & +2.1\%  & $-$3.7\%  & $-$13.9\% & $-$68.8\% & $-$17.0\% & $-$69.5\% \\
Round-robin & UniBoost & +2.7\%  & $-$0.9\%  & $-$13.5\% & $-$48.0\% & +5.5\%    & $-$60.5\% \\
JSQ         & UniBoost & $-$23.0\% & $-$17.2\% & $-$23.5\% & $-$73.5\% & $-$52.6\% & $-$77.5\% \\
\bottomrule
\end{tabular}%

\end{table*}

With JSQ (join-shortest-queue) routing---which provides better load balancing per instance---the TBT and E2E improvements become more pronounced, confirming that UniBoost composes favorably with stronger global dispatch policies.

\paragraph{Scalability with Cluster Size}

To evaluate scalability, we use trace-replay methodology---standard in the systems community for emulating large clusters. Traces are downsampled with timestamps at the global router, and the resulting instance-level requests are replayed to emulate clusters of varying size with a random dispatcher. Table~\ref{tab:cluster-scale} reports the relative improvement of UniBoost over baseline vLLM (lower is better) on a mixed reasoning + chat workload. The gains remain consistent across scales from $32\times$ to $512\times$, indicating that UniBoost's per-instance benefits are largely independent of cluster size.

\begin{table*}[h]
\centering
\caption{Relative improvement of UniBoost over vLLM at different cluster scales.}
\label{tab:cluster-scale}
\begin{tabular}{lcccc}
\toprule
Scale & Avg E2E & P99 E2E & Avg TTFT & P99 TTFT \\
\midrule
$32\times$  & $-$2.3\% & $-$13.9\% & +8.4\%  & $-$8.7\%  \\
$64\times$  & $-$2.8\% & $-$24.2\% & +3.6\%  & $-$10.3\% \\
$128\times$ & $-$1.9\% & $-$12.7\% & +7.8\%  & $-$11.9\% \\
$512\times$ & $-$0.4\% & $-$23.2\% & +1.2\%  & $-$13.5\% \\
\bottomrule
\end{tabular}%

\end{table*}

\end{document}